\renewcommand{\emph}{\textit}
\definecolor{darkblue}{rgb}{0.0,0.0,0.5}
\newcommand{\comment}[1]{}
\newcommand{\w}{\boldsymbol{w}}
\newcommand{\W}{\boldsymbol{W}}
\renewcommand{\v}{\boldsymbol{v}}
\renewcommand{\a}{\boldsymbol{a}}
\newcommand{\e}{\boldsymbol{e}}
\newcommand{\Hilb}{{\mathcal{H}}}
\renewcommand{\L}{{\mathfrak L}}
\newcommand{\Real}{{\mathbb R}}
\newcommand{\Reg}{{\mathfrak R}}
\newcommand{\R}{{\mathbb R}}
\renewcommand{\v}{\boldsymbol{v}}
\renewcommand{\a}{{\boldsymbol{a}}}
\newcommand{\X}{\mathcal{X}}
\newcommand{\Y}{\mathcal{Y}}
\newcommand{\loss}{l}
\newcommand{\vtheta}{{\boldsymbol{\theta}}}
\newcommand{\valpha}{{\boldsymbol{\alpha}}}
\newcommand{\C}{\bf{C}}
\newcommand{\dom}{\text{\rm dom}}
\newcommand{\zero}{\mathbf{0}}
\newcommand{\norm}[1]{\left\Vert#1\right\Vert}
\newcommand{\bracket}[1]{\left(#1\right)}
\newcommand{\set}[1]{\left\{#1\right\}}
\newcommand{\inner}[1]{\left\langle#1\right\rangle}
\newtheorem{theorem}{Theorem}
\newtheorem{definition}[theorem]{Definition}
\newtheorem{remark}[theorem]{Remark}
\newtheorem{corollary}[theorem]{Corollary}
\newtheorem{proposition}[theorem]{Proposition}
\newtheorem{lemma}[theorem]{Lemma}
\newtheorem{problem}[theorem]{Problem}
\DeclareMathOperator*{\argmin}{argmin\xspace}
\DeclareMathOperator*{\arginf}{arginf\xspace}
\DeclareMathOperator*{\argsup}{argsup\xspace}
\DeclareMathOperator*{\tr}{tr\xspace}
\newcommand{\hide}[1]{}
\begin{document}

\title{Framework for Multi-task Multiple Kernel Learning and Applications in Genome Analysis}

\author{Christian Widmer\thanks{Parts of this work was done while CW was at the Friedrich Miescher Laboratory, Spemannstr. 39, 72076 T\"ubingen, Germany and also with the Machine Learning Group, Technische Universit\"at Berlin, Franklinstr. 28/29, 10587 Berlin, Germany.}\\
       Computational Biology Center \\
       Memorial Sloan Kettering Cancer Center \\
       1275 York Avenue, Box 357, New York, NY 10065, USA \bigskip\\
       Marius Kloft\thanks{Most parts of the work was done while MK was with the Computational Biology Center of Memorial Sloan Kettering Cancer Center (1275 York Avenue, New York, NY 10065, USA) and the Courant Institute of Mathematical Sciences (251 Mercer Street, New York, NY 10012, USA).}\\
       Department of Computer Science\\
       Humboldt University of Berlin \\
       Unter den Linden 6\\
       10099 Berlin, Germany
       \bigskip\\
       Vipin T. Sreedharan \\
       Computational Biology Center \\
       Memorial Sloan Kettering Cancer Center \\
       1275 York Avenue, New York, NY 10065, USA\bigskip\\
       Gunnar R\"atsch\thanks{To whom correspondence should be addressed. Email: \texttt{Gunnar.Ratsch@ratschlab.org}} \\
       Computational Biology Center \\
       Memorial Sloan Kettering Cancer Center \\
       1275 York Avenue, New York, NY 10065, USA
}

\maketitle

\begin{abstract}

%
We present a general regularization-based framework for Multi-task learning (MTL), 
in which the similarity between tasks can be learned or refined using $\ell_p$-norm 
Multiple Kernel learning (MKL). 
Based on this very general formulation (including a general loss function), we derive the
corresponding dual formulation using Fenchel duality applied to Hermitian matrices.
We show that numerous established MTL methods can be derived as special cases from
both, the primal and dual of our formulation.
Furthermore, we derive a modern dual-coordinate descend optimization strategy for
the hinge-loss variant of our formulation and provide convergence bounds for our
algorithm. As a special case, we implement in C++ a fast LibLinear-style solver for $\ell_p$-norm MKL.
%
%
In the experimental section, we analyze various aspects of our algorithm such 
as predictive performance and ability to reconstruct 
task relationships on biologically inspired synthetic data, where we have full control over the underlying ground truth.
We also experiment on a new dataset from the domain of computational biology that we collected for the purpose of this paper. 
It concerns the prediction of transcription start sites (TSS)  over nine organisms, which is a crucial task in gene finding.
Our solvers including all discussed special cases are made available as open-source software as part of the SHOGUN machine learning toolbox (available at \url{http://shogun.ml}).

\end{abstract}

\section{Introduction}\label{introduction}

%

One of the key challenges in computational biology is to build effective and efficient
statistical models that learn from data to predict, analyze, and ultimately
understand biological systems.
Regardless of the problem at hand, however, be it the recognition of sequence
signals such as splice sites, the prediction of protein-protein
interactions, or the modeling of metabolic networks, we frequently have
access to data sets for \emph{multiple} organisms, tissues or cell-lines.
Can we develop methods that optimally combine such multi-domain data?

While the field of Transfer or Multitask Learning enjoys a growing interest
in the Machine Learning community in recent years, it can be traced back to ideas
from the mid 90's. 
%
During that time \cite{thrun1996learning} asked the provocative question 
"Is Learning the $n$-th Thing any Easier Than Learning the First?",
effectively laying the ground for the field of Transfer Learning.
Their work was motivated by findings in human psychology, where humans
were found to be capable of learning based on as few as a single
example \citep{ahn1993psychological}.
The key insight was that humans build upon previously learned related concepts, when learning new tasks, something \cite{thrun1996learning} call \emph{lifelong learning}.
Around the same time, \cite{DBLP:conf/icml/Caruana93,Caruana1997} coined the term \emph{Multitask Learning}.
Rather than formalizing the idea of learning a sequence of tasks, 
they propose machinery to learn multiple related tasks in parallel.

While most of the early work on Multitask Learning was carried out in
the context of learning a shared representation for neural networks
\citep{Caruana1997,Baxter2000}, \cite{Evgeniou2004} adapted this
concept in the context of kernel machines.
At first, they assumed that the models of all tasks are close to each
other \citep{Evgeniou2004} and later generalized their framework to
non-uniform relations, allowing to couple some tasks more strongly
than others \citep{evgeniou2006learning}, according to some externally
defined task structure.
%
In recent years, there has been an increased interest in learning the structure potentially underlying the tasks.
\cite{Ando2005} proposed a non-convex method based on 
Alternating Structure Optimization (ASO) for identifying 
the task structure. A convex relaxation of their approach
was developed by \cite{Chen2009}. \cite{Zhou2011}
showed the equivalence between ASO and Clustered Multitask 
Learning \citep{jacob2008clustered,obozinski2010joint} and
their convex relaxations.
While the structure between tasks is defined by assigning tasks to
clusters in the above approaches, \cite{Zhang2010} propose to learn a
constrained task covariance matrix directly and show the relationship
to Multitask Feature Learning
\citep{Argyriou2007,argyriou2008convex,argyriou2008spectral,Liu2009}.
Here, the basic idea is to use a LASSO-inspired \citep{Tibshirani1996}
$\ell_{2,1}$-norm to identify a subset of features that is relevant to
all tasks.

A challenge remains to find an adequate task similarity measure to
compare the multiple domains and tasks.  While existing parameter-free
approaches such as \cite{romera2013multilinear} ignore biological
background knowledge about the relatedness of the tasks, in this
paper, we present a parametric framework for regularization-based
multitask learning that subsumes several approaches and automatically
learns the task similarity from a set of candidates measures using
$\ell_p$-norm Multiple Kernel learning (MKL) see, for instance,
\citet{KloBreSonZie11}.
We thus provide a middle ground between assuming known task
relationships and learning the entire task structure from scratch.
We propose a general unifying framework of MT-MKL, including a
thorough dualization analysis using Fenchel duality, based on which we
derive an efficient linear solver that combines our general framework
with advances in linear SVM solvers and evaluate our approach on
several datasets from Computational Biology.

This paper is based on preliminary material shown in several
conference papers and workshop contributions
\citep{widmer2010leveraging,Widmer2010,widmer2010novel,Widmer2012,widmer2012multitask},
which contained preliminary aspects of the framework presented here.
This version additionally includes a unifying framework including
Fenchel duality analysis, more complete derivations and theoretical
analysis as well as a comparative study in multitask learning and
genomics, where we brought together genomic data for a wide range of
biological organisms in a multitask learning setting.  This dataset
will be made freely available and may serve as a benchmark in the
domain of multitask learning.  Our experiments show that combining
data via multitask learning can outperform learning each task
independently. In particular, we find that it can be crucial to
further refine a given task similarity measure using multitask
multiple kernel learning.

The paper is structured as follows: In Section~\ref{graph_mtl} we
introduce a unifying view of multitask multiple kernel learning that
covers a wide range loss functions and regularizers. We give a general
Fenchel dual representation and a representer theorem, and show that
the formulation contains several existing formulations as special
cases. In Section~\ref{sec:op} we propose two optimization strategies:
one that can be applied out of the box with any custom set of kernels
and another one that is specifically tailored to linear kernels as
well as string kernels. Both algorithms were implemented into the
Shogun machine learning toolbox. In Section~\ref{experiments} we
present results of empirical experiments on artificial data as well as
a large biological multi-organism dataset curated for the purpose of
this paper.

\section{A Unifying View of Regularized Multi-Task Learning}\label{graph_mtl}

In this section, we present a novel multi-task framework comprising
many existing formulations, allowing us to view prevalent approaches
from a unifying perspective, yielding new insights.  We can also
derive new learning machines as special instantiations of the general
model. Our approach is embedded into the general framework of
regularization-based supervised learning methods, where we minimize a
functional
$$ \Reg(\w) + C\thinspace\L(\w) \thinspace,$$ which consists of a
loss-term $\L(\w)$ measuring the training error and a regularizer
$\Reg(\w)$ penalizing the complexity of the model $\w$.  The positive
constant $C>0$ controls the trade-off of the criterion.
The formulation can easily be generalized to the multi-task setting,
where we are interested in obtaining several models parametrized by
$\w_1, \ldots, \w_T$, where $T$ is the number of tasks.

In the past, this has been achieved by employing a joint
regularization term $\Reg(\w_1, \ldots, \w_T)$ that penalizes the
discrepancy between the individual models
\citep{evgeniou2006learning,agarwal2010learning},
$$ \Reg(\w_1, \ldots, \w_T) +
C\thinspace\L(\w_1,\ldots,\w_t)\thinspace.$$ A common approach is, for
example, to set $ \Reg(\w_1, \ldots, \w_T) =
\frac{1}{2}\sum\nolimits_{s,t=1}^Tq_{st}\norm{\w_s-\w_t}^2\thinspace,$
where $Q=\bracket{q_{st}}_{a\leq s,t\leq T}$ is a task similarity
matrix.  In this paper, we develop a novel, general framework for
multi-task learning of the form
$$\min_{\W,\theta} ~ \Reg(\W,\vtheta) + C\L(\W) \thinspace,$$ 
where $\W=(W_m)_{1\leq m\leq M}$, $W_m=(\w_{m1}, \ldots,
\w_{mT})$. This approach has the additional flexibility of allowing us
to incorporate \emph{multiple task similarity matrices} into the
learning problem, each equipped with a weighting factor. Instead of
specifying the weighting factor \emph{a priori}, we will automatically
determine optimal weights from the data as part of the learning
problem. We show that the above formulation comprises many existing
lines of research in the area; this not only includes very recent
lines but also seemingly different ones.  The unifying framework
allows us to analyze a large variety of MTL methods jointly, as
exemplified by deriving a general dual representation of the
criterion, without making assumptions on the employed norms and
losses, besides the latter being convex. This delivers insights into
connections between existing MTL formulations and, even more
importantly, can be used to derive \emph{novel} MTL formulations as
special cases of our framework, as done in a later section of this
paper.

\subsection{Problem Setting and Notation}

Let $D=\{(x_1,y_1),\ldots,(x_n,y_n)\}$ be a set of training
pattern/label pairs.  In multitask learning, each training example
$(x_i,y_i)$ is associated with a task $\tau(i)\in\{1,\ldots,T\}$.
Furthermore, we assume that for each $t\in\{1,\ldots,T\}$ the
instances associated with task $t$ are independently drawn from a
probability distribution $P_t$ over a measurable space
$\X_t\times\Y_t$.  We denote the set of indices of training points of
the $t$th task by $I_t:=\{i\in\{1,\ldots,n\}:\tau(i)=t\}$.  The goal
is to find, for each task $t\in\{1,\ldots,T\}$, a prediction function
$f_t:\X\rightarrow\R$.  In this paper, we consider composite functions
of the form
$f_t:x\mapsto\sum_{m=1}^M\langle\w_{mt},\varphi_m(x)\rangle$, $1\leq
t\leq T$, where $\varphi_m:\X\rightarrow\Hilb_m$, $1\leq m\leq M$, are
mappings into reproducing Hilbert spaces $\Hilb_1,\ldots,\Hilb_M$,
encoding multiple views of the multi-task learning problem via kernels
$k_m(x,\tilde{x})=\langle\varphi_m(x),\varphi_m(\tilde{x})\rangle$,
and $\W:=(\w_{mt})_{1\leq m\leq M,\thinspace1\leq t\leq T}$,
$w_{mt}\in\Hilb_m$ are parameter vectors of the prediction function.

For simplicity of notation, we concentrate on binary prediction, i.e.,
$\Y=\{-1,1\}$, and encode the loss of the prediction problem as a loss
term $\L(\W):=\sum_{i=1}^nl(y_if_{\tau(i)}(x_i))$, where $\loss:
\mathbb{R} \rightarrow \R\cup\{\infty\}$ is a loss function, assumed
to be closed convex, lower bounded and finite at $0$. To consider
sophisticated couplings between the tasks, we introduce so-called
\emph{task-similarity matrices} $Q_1,\ldots,Q_M\in\textrm{GL}_n(\R)$
with $Q_m=(q_{mst})_{1\leq s,t\leq T}$,
\thinspace$Q^{-1}_m=\big(q_{mst}^{(-1)}\big)_{1\leq s,t\leq T}$ and
consider the regularizer
$\Reg_{\vtheta}(\W)=\frac{1}{2}\sum_{m=1}^M\Vert
W_m\Vert^2_{Q_m}/\theta_m$ (setting $1/0:=\infty$, $0/0:=0$) with
$\norm{W_m}_{Q_m}:=\tr(W_mQ_mW_m^*)=\sqrt{\sum_{s,t=1}^Tq_{mst}\inner{\w_{ms},\w_{mt}}}$,
where
$W_m=(\w_{m1},\ldots,\w_{mT})\in\bigoplus_{t=1}^T\Hilb_m=:\Hilb_m^T$
with adjoint $W_m^*$ and $\tr(\cdot)$ denotes the trace class operator
of the tensor Hilbert space $\Hilb_m\otimes\Hilb_m$.  Note that also
the direct sum $\Hilb:=\bigoplus_{m=1}^M\Hilb_m^T$ is a Hilbert space,
which will allow us to view $\W\in\Hilb$ as an element in a Hilbert
space.  The parameters $\vtheta=(\theta_m)_{1\leq m\leq
  M}\in\Theta_p$, $\Theta_p:=\{\vtheta\in\R^M: \theta_m\geq0, 1\leq
m\leq M, \norm{\vtheta}_p\leq1\}$, are adaptive weights of the views,
where $\norm{\vtheta}_p=\sqrt[p]{\sum_{m=1}^M |\theta_m|^p}$ denotes
the $\ell_p$-norm.  Here $\vtheta\succeq\zero$ denotes
$\theta_m\geq0$, $m=1,\ldots,M$.

Using the above specification of the regularizer and the loss term, we
study the following unifying primal optimization problem.

\begin{problem}[\rm Primal problem]\label{prob:primal}
  Solve
  $$ \inf_{\vtheta\in\Theta_p,\W\in\Hilb}  \quad \Reg_{\vtheta}(\W)  ~+~ C\thinspace\L\big(A(\W)\big) \thinspace, $$
  where
  \begin{align*} 
    &\Reg_{\vtheta}(\W) := \frac{1}{2}\sum_{m=1}^M\frac{\Vert W_m\Vert^2_{Q_m}}{\theta_m} \thinspace,  \quad  \Vert W_m\Vert^2_{Q_m}:=\tr(W_mQ_mW_m^*) \\
    &\L(A(\W)) := \sum_{i=1}^nl\big(A_i(\W)\big)\thinspace,  ~~  A(\W):=\bracket{A_i(\W)}_{1\leq i\leq n}\thinspace, ~~  A_i(\W):=y_i\sum_{m=1}^M\big\langle\w_{m\tau(i)},\varphi_m(x_i)\big\rangle\thinspace. 
  \end{align*}
\end{problem}

\subsection{Dualization}\label{sec:dualization}

Dual representations of optimization problems deliver insight into the
problem, which can be used in practice to, for example, develop
optimization algorithms (so done in Section~\ref{sec:op} of this
paper). In this section, we derive a dual representation of our
unifying primal optimization problem, i.e.,
Problem~\ref{prob:primal}. Our dualization approach is based on
Fenchel-Rockafellar duality theory.  The basic results of
Fenchel-Rockafellar duality theory for Hilbert spaces are reviewed in
Appendix~\ref{app:fenchel}.  We present two dual optimization
problems: one that is dualized with respect to $\W$ only (i.e.,
considering $\vtheta$ as being fixed) and one that completely removes
the dependency on $\vtheta$.

\subsubsection{Computation of Conjugates and Adjoint Map}

To apply Fenchel's duality theorem, we need to compute the adjoint map
$A^*$ of the linear map $A:\Hilb\rightarrow\R^n$,
$A(\W)=\big(A_i(\W)\big)_{1\leq i\leq n}$, as well as the convex
conjugates of $\Reg$ and $\L$.  See Appendix~\ref{app:fenchel} for a
review of the definitions of the convex conjugate and the adjoint map.
First, we notice that, by the basic identities for convex conjugates
of Prop.~\ref{eq:comp_rules} in Appendix \ref{app:fenchel}, we have
that
$$ \big(C\L(\valpha)\big)^* ~ = ~ C\L^*(\valpha/C) ~ = ~ C
\Big(\sum\nolimits_{i=1}^n \loss(\alpha_i/C)\Big)^* ~ = ~ C
\sum\nolimits_{i=1}^n \loss^*(\alpha_i/C) \thinspace.$$ 
Next, we define $A^*:\R^n\rightarrow\Hilb$ by
$A^*(\valpha)=\big(\sum_{i\in
  I_t}\alpha_iy_i\varphi_m(x_i)\big)_{1\leq m\leq M,1\leq t\leq
  T}$\thinspace.  Recall that the mapping between tasks and examples
may be expressed in one of two ways.  We may use index set $I_t$ to
retrieve the indices of training examples associated with task $t$.
Alternatively, we may use task indicator $\tau(i) \in \{1,\dots,T\}$
to obtain the task index $\tau(i)$ associated with $i$th training
example.  Using this notation, we verify that, for any $\W\in\Hilb$
and $\valpha\in\R^n$, it holds
\begin{eqnarray*}
  \inner{\W,A^*(\valpha)} &=& \Big\langle\big(w_{mt}\big)_{1\leq m\leq M,1\leq t\leq T}~,\Big(\sum\nolimits_{i\in I_t}\alpha_iy_i\varphi_m(x_i)\Big)_{1\leq m\leq M,1\leq t\leq T}\Big\rangle \\
         &=& \sum_{m=1}^M\sum_{t=1}^T \sum_{i\in I_t} \alpha_iy_i \inner{w_{mt},\varphi_m(x_i)} \\
         &=& \sum_{i=1}^n\sum_{m=1}^M \alpha_iy_i \inner{w_{m\tau(i)},\varphi_m(x_i)} \\
       &=& \inner{A(\W),\valpha}\thinspace.
\end{eqnarray*}
Thus, $A^*$ as defined above is indeed the adjoint map.  Finally, we
compute the conjugate of $\Reg$ with respect to $\W$, where we
consider $\vtheta$ as a constant (be reminded that $Q_m$ are given).
We write $r_m(W_m):=\frac{1}{2}\norm{W_m}_{Q_m}^2$ and note that, by
Prop.~\ref{eq:comp_rules},
\begin{eqnarray*}
  \Reg_{\vtheta}^*(\W) &=& \bracket{\sum_{m=1}^M\theta_m^{-1}r_m(W_m)}^*  ~=~  \sum_{m=1}^M\theta_m^{-1}r_m^*(\theta_mW_m) \thinspace.
\end{eqnarray*}
Furthermore, 
\begin{equation}\label{eq:gm_aux}
  r^*_m(W_m) = \sup_{V_m\in\Hilb_m^T} \underbrace{\langle V_m,W_m\rangle - \frac{1}{2}\tr(V_mQ_mV_m)}_{=:\psi(V_m)} \thinspace.
\end{equation}  
The supremum is attained when $\nabla_{V_m}\psi(V_m)=0$ so that in the optimum $V_m=Q_m^{-1}W_m$. Resubstitution into \eqref{eq:gm_aux} gives
$r_m^*(W_m) = \frac{1}{2} \tr(W_mQ_m^{-1}W_m)=\frac{1}{2}\norm{W_m}_{Q^{-1}_m}^2$, so that we have
$$\Reg_{\vtheta}^*(\W) = \frac{1}{2}\sum_{m=1}^M \theta_m\norm{W_m}_{Q_m^{-1}}^2 \thinspace.$$


\subsubsection{Dual Optimization Problems}

\noindent We may now apply Fenchel's duality theorem
(cf.\ Theorem~\ref{thm:fenchel} in Appendix~\ref{app:fenchel}), which
gives the following dual MTL problem:

\begin{problem}[\rm Dual problem---partially dualized minimax formulation]\label{prob:dual}
Solve
\begin{equation}\label{eq:partial_dual_prob}
  \inf_{\vtheta\in\Theta_p} ~ \sup_{\valpha\in\R^n} ~\thinspace -\Reg_{\vtheta}^*(A^*(\valpha)) ~-~ C\thinspace \L^*(-\valpha/C)  \thinspace, 
\end{equation}
where 
\begin{align}\label{eq:part_dual_constr}
\begin{split}
   & \Reg_{\vtheta}^*(A^*(\valpha)) = \frac{1}{2}\sum_{m=1}^M\theta_m\norm{A^*_m(\valpha)}_{Q_m^{-1}}^2 \thinspace, \quad 
        \L^*(\valpha)  =  \sum_{i=1}^nl^*(\alpha_i)\thinspace,\\
   & A^*(\valpha):=\bracket{A^*_m(\valpha)}_{1\leq m\leq M}\thinspace, \quad A^*_m(\valpha)=\Big(\sum\nolimits_{i\in I_t}\alpha_iy_i\varphi_m(x_i)\Big)_{1\leq t\leq T}  \thinspace. 
\end{split}
\end{align}
\end{problem}
The above problem involves minimization with respect to (the primal variable) $\vtheta$ and maximization with respect to (the dual variable) $\valpha$. 
The optimization algorithm presented later in this paper will optimize is based on this minimax formulation. 
However, we may completely remove the dependency on $\vtheta$, which sheds further insights into the problem, which 
will later be exploited for optimization, i.e., to control the duality gap of the computed solutions.

To remove the dependency on $\vtheta$, we first note that Problem~\ref{prob:dual} is convex (even affine) in $\vtheta$ and concave in $\valpha$ and 
thus, by Sion's minimax theorem, we may exchange the order of minimization and maximization:
\begin{eqnarray*}
  \textrm{Eq.}\medspace\eqref{eq:partial_dual_prob} &=& \inf_{\vtheta\in\Theta_p} ~ \sup_{\valpha\in\R^n} ~\thinspace -\frac{1}{2}\sum_{m=1}^M\theta_m\norm{A^*_m(\valpha)}_{Q_m^{-1}}^2 ~-~ C\thinspace \L^*(-\valpha/C) \\
    &=& \sup_{\valpha\in\R^n} ~ -\sup_{\vtheta\in\Theta_p} ~\thinspace \frac{1}{2}\sum_{m=1}^M\theta_m\norm{A^*_m(\valpha)}_{Q_m^{-1}}^2 ~+~ C\thinspace \L^*(-\valpha/C) \\ 
    &=& \sup_{\valpha\in\R^n} ~ -\frac{1}{2}\bigg\Vert\bracket{\norm{A^*_m(\valpha)}_{Q_m^{-1}}^2}_{1\leq m\leq M}\bigg\Vert_{p^*} ~+~ C\thinspace \L^*(-\valpha/C) 
\end{eqnarray*}
where the last step is by the definition of the dual norm, i.e., 
$\sup_{\vtheta\in\Theta_p} \big\langle\vtheta,\widetilde{\vtheta}\big\rangle = \big\Vert\widetilde{\vtheta}\big\Vert_{p^*}$ and $p^*:=p/(p-1)$ denotes the conjugated exponent. We thus have the following alternative dual problem.
\begin{problem}[\rm Dual problem---completely dualized formulation]\label{prob:complete_dual}
Solve
$$   \sup_{\valpha\in\R^n} ~ -\frac{1}{2}\bigg\Vert\bracket{\norm{A^*_m(\valpha)}_{Q_m^{-1}}^2}_{1\leq m\leq M}\bigg\Vert_{p^*} ~+~ C\thinspace \L^*(-\valpha/C) $$
where 
$$ \L^*(\valpha)  =  \sum_{i=1}^nl^*(\alpha_i)\thinspace, \quad  A^*_m(\valpha)=\Big(\sum\nolimits_{i\in I_t}\alpha_iy_i\varphi_m(x_i)\Big)_{1\leq t\leq T}  \thinspace. $$
\end{problem}

\subsection{Representer Theorem}\label{sec:representer}

Fenchel's duality theorem (Theorem~\ref{thm:fenchel} in Appendix~\ref{app:fenchel}) yields 
a useful optimality condition, that is,
$$  (\W^\star,\valpha^\star)\text{ \thinspace optimal} ~~ \Leftrightarrow ~~ \W^\star=\nabla g^*(A^*(\valpha^\star)) \thinspace, $$
under the minimal assumption that $g\circ A^*$ is differentiable in $\valpha^\star$.
The above requirement can be thought of as an analog to the KKT condition \emph{stationarity} in Lagrangian duality. 
Note that we can rewrite the above equation by inserting the definitions of $g$ and $A$ from the previous subsection; this gives, for any $m=1,\ldots,M$,
$$ \forall m=1,\ldots,M: \quad W_m^\star = \theta_mQ_m^{-1}\bracket{\sum\nolimits_{i\in I_t}\alpha_i^\star y_i\varphi_m(x_i)}_{1\leq t\leq T} \thinspace,$$
which we may rewrite as
\begin{equation}\label{eq:representer}
  \forall \thinspace m=1,\ldots,M,\thinspace t=1,\ldots,T: \quad \w_{mt}^\star = \theta_m\sum_{i=1}^nq^{(-1)}_{m\tau(i)t}\alpha^\star_iy_i\varphi_m(x_i) \thinspace.
\end{equation}
The above equation gives us a representer theorem \citep{argyriou2009there} for the optimal $\W^\star$, which we will exploit later in this paper for deriving an efficient optimization algorithm
to solve Problem~\ref{prob:primal}.

\subsection{Relation to Multiple Kernel Learning}\label{sec:rel-mkl}

\cite{evgeniou2006learning} introduce the notion of a \emph{multi-task kernel}. We can generalize this framework by
defining multiple multi-task kernels
\begin{equation}\label{eq:def-mtk}
  \tilde{k}_m(x_i,x_j) := q^{(-1)}_{m\tau(i)\tau(j)}k_m(x_i,x_j) \thinspace, ~~ m=1,\ldots,M \thinspace.
\end{equation}
To see this, first note that the term $\norm{A^*_m(\valpha)}^2_{Q^{-1}}$ can alternatively be written as
\begin{eqnarray}\label{eq:mtk-norm-term}
  \begin{split}
  \norm{A^*_m(\valpha)}^2_{Q_m^{-1}} &= \tr\bracket{A^*_m(\valpha)\thinspace Q_m^{-1}\thinspace A^*_m(\valpha)^*} \\
    &= \tr\bracket{\bracket{\sum\nolimits_{i\in I_s}\alpha_iy_i\varphi_m(x_i)}_{1\leq s\leq T} Q_m^{-1}\bracket{\sum\nolimits_{i\in I_t}\alpha_iy_i\varphi_m(x_i)}_{1\leq t\leq T}^*} \\
    &= \sum_{s,t=1}^Tq_{mst}^{(-1)}\inner{\sum\nolimits_{i\in I_s}\alpha_iy_i\varphi_m(x_i),\sum\nolimits_{i\in I_t}\alpha_iy_i\varphi_m(x_i)} \\   
    &= \sum_{s,t=1}^T q_{mst}^{(-1)}\sum\nolimits_{i\in I_s,j\in I_t}\alpha_i\alpha_jy_iy_j\underbrace{\varphi_m(x_i)\varphi_m(x_j)}_{=\thinspace k_m(x_i,x_j)}  \\
    &=  \sum_{i,j=1}^n \alpha_i\alpha_jy_iy_j\underbrace{q_{m\tau(i)\tau(j)}^{(-1)}k_m(x_i,x_j)}_{\tilde{k}_m(x_i,x_j)} \thinspace. 
  \end{split}
\end{eqnarray}
so it follows
\begin{equation*}
  \Reg_{\vtheta}^*(A^*(\valpha)) ~=~ \frac{1}{2}\sum_{i,j=1}^n \alpha_i\alpha_jy_iy_j\sum_{m=1}^M\theta_m\tilde{k}_m(x_i,x_j) 
\end{equation*}
and thus Problem~\ref{prob:dual} becomes
\begin{equation}\label{eq:prob-mtk}
  \inf_{\vtheta\in\Theta_p} ~ \sup_{\valpha\in\R^n} ~\thinspace -\frac{1}{2}\sum_{i,j=1}^n \alpha_i\alpha_jy_iy_j\sum_{m=1}^M\theta_m\tilde{k}_m(x_i,x_j) 
      ~-~ C\thinspace \L^*(-\valpha/C)  \thinspace,  
\end{equation}
which is an $\ell_p$-regularized multiple-kernel-learning problem over the kernels $\tilde{k}_1,\ldots,\tilde{k}_M$ \citep{KloBreLasSon08,KloBreSonZie11}.

\begin{table}[t]
\begin{center}
\small
\medskip
\begin{tabular}{|l|l|l|}
  \hline
  & loss $l(a), a\in\R$  &  dual loss $l^*(a)$  \\\hline
  hinge loss   &    $\max(0,1-a)$     &   $\begin{cases} a\text{, \thinspace if }-1\leq a\leq 0 \\ \infty\text{, \thinspace elsewise} \end{cases}$  \\[4pt]
    logistic loss &  $\log(1+\exp(-a)$  &   $\begin{cases} -a\log(-a)+(1+a)\log(1+a)\text{, \thinspace if }-1\leq a\leq 0 \\ \infty\text{, \thinspace elsewise} \end{cases}$  \\\hline
\end{tabular}
\bigskip
\caption{\label{tab:Fenchel} Examples of loss functions and corresponding conjugate functions. See Appendix~\ref{app:loss}.}
\end{center}
\end{table}

\subsection{Specific Instantiations of the Framework}

In this section, we show that several regularization-based multi-task learning machines are subsumed by the generalized 
primal and dual formulations of Problems~\ref{prob:primal}--\ref{prob:dual}. 
As a first step, we will specialize our general framework to the hinge-loss,
and show its primal and dual form.
Based on this, we then instantiate our framework further to known methods in increasing complexity,
starting with single-task learning (standard SVM) 
and working towards graph-regularized multitask learning and its relation
to multitask kernels.
Finally, we derive several novel methods from our general framework.
%
%
%
%


%
\subsubsection{Hinge Loss}\label{sec:hinge}

Many existing multi-task learning machines utilize the hinge loss $l(a)=\max(0,1-a)$. Employing the hinge loss in Problem~\ref{prob:primal}, yields the loss term
$$ \L(A(\W)) = \sum_{i=1}^n \max\bracket{0,1-y_i\sum\nolimits_{m=1}^M\big\langle\w_{m\tau(i)},\varphi_m(x_i)\big\rangle} \thinspace. $$
Furthermore, as shown in Table~\ref{tab:Fenchel}, the conjugate of the hinge loss is $l^*(a)=a$, if $-1\leq a\leq 0$ and $\infty$ elsewise, which
is readily verified by elementary calculus. Thus, we have
\begin{equation}\label{eq:dualloss_hinge}
  -C\thinspace\L^*(-\valpha/C) = -C\sum_{i=1}^n l^*(-\alpha_i/C) = \sum_{i=1}^n\alpha_i,
\end{equation}
provided that $\forall i=1,\ldots,n:0\leq\alpha_i\leq C$; otherwise we have $-C\thinspace\L^*(-\valpha/C) = -\infty$.
Hence, for the hinge-loss, we obtain the following pair of primal and dual problem.\\

\noindent \quad Primal:
\begin{equation} 
\inf_{\substack{\vtheta\in\Theta_p\\ \W\in\Hilb}}  \quad \frac{1}{2}\sum_{m=1}^M\frac{\Vert W_m\Vert^2_{Q_m}}{\theta_m} ~+~ C\thinspace\sum_{i=1}^n \max\bracket{0,1-y_i\sum\nolimits_{m=1}^M\big\langle\w_{m\tau(i)},\varphi_m(x_i)\big\rangle} \thinspace
\label{primal_hinge}
\end{equation}%
\newline
\noindent \quad Dual:
\begin{equation}
\inf_{\vtheta\in\Theta_p} ~ \sup_{\zero\preceq\valpha\preceq\C} ~\thinspace -\frac{1}{2}\sum_{i,j=1}^n \alpha_i\alpha_jy_iy_j\sum_{m=1}^M\theta_m\tilde{k}_m(x_i,x_j) ~+~ \sum_{i=1}^n\alpha_i \thinspace.\end{equation}\label{dual_hinge}

%
%
%
%
%

\subsubsection{Single Task Learning}
Starting from the simplest special case, we briefly show how 
single-task learning methods may be recovered from our general framework.
By mapping well understood single-task methods onto our framework,
we hope to achieve two things. First, we believe this will greatly
facilitate understanding for the reader who is familiar with standard
methods like the SVM. Second, we pave the way for applying efficient
training algorithms developed in Section~\ref{sec:op} to these single-task formulations,
for example yielding a new linear solver for 
non-sparse Multiple Kernel Learning as a corollary.
\paragraph{Support Vector Machine} 
In the case of the single-task ($\W=\w$, $Q=1$), single kernel SVM ($M=1$), the primal from Equation~\ref{primal_hinge} and dual from Equation~\ref{dual_hinge} can be greatly simplified:
\begin{equation*} 
\inf_{\w\in\Hilb}  \quad \frac{1}{2}\Vert \w\Vert^2 ~+~ C\thinspace\sum_{i=1}^n \max\bracket{0,1-y_i\big\langle\w,\varphi(x_i)\big\rangle} \thinspace,
\end{equation*}
%
which corresponds to the well-established linear SVM formulation (without bias).
Similarly, the dual is readily obtained from Equation~\ref{dual_hinge}
and is given by
\begin{equation*}
\sup_{\zero\preceq\valpha\preceq\C} ~\thinspace -\frac{1}{2}\sum_{i,j=1}^n \alpha_i\alpha_jy_iy_j k(x_i,x_j) ~+~ \sum_{i=1}^n\alpha_i \thinspace.
\end{equation*}

\paragraph{MKL} $\ell_p$-norm MKL \citep{KloBreSonZie11} is obtained as a special
case of our framework. This case is of particular interest,
as it allows to obtain a linear solver for {$\ell_p$-norm MKL}, as a corollary.
By restricting the number of tasks to one (i.e., $T=1$),
$\W_m$ becomes $\w_m$ and $Q=1$.
Equation~\eqref{primal_hinge} reduces to:
\begin{equation*} 
\inf_{\vtheta\in\Theta_p,\W\in\Hilb}  \quad \frac{1}{2}\sum_{m=1}^M\frac{\Vert \w_m\Vert^2}{\theta_m} ~+~ C\thinspace\sum_{i=1}^n \max\bracket{0,1-y_i\sum\nolimits_{m=1}^M\big\langle\w_{m},\varphi_m(x_i)\big\rangle} \thinspace
\thinspace. \end{equation*}
In agreement with \cite{KloBreSonZieLasMue09}, we recover the dual formulation
from Equation~\ref{dual_hinge}.
\begin{equation*}
\inf_{\vtheta\in\Theta_p} ~ \sup_{\zero\preceq\valpha\preceq\C} ~\thinspace -\frac{1}{2}\sum_{i,j=1}^n \alpha_i\alpha_jy_iy_j\sum_{m=1}^M\theta_m k_m(x_i,x_j) ~+~ \sum_{i=1}^n\alpha_i \thinspace.\end{equation*}
%
%

\subsubsection{Multitask Learning}\label{sec:MTL}

Here, we first derive the primal and dual formulations of regularization-based
multitask learning as a special case of our framework and 
then give an overview of existing variants
that can be mapped onto this formulation as a precursor to novel
instantiations in Section~\ref{extension}.
In this setting, we deal with multiple tasks $t$, but only 
a single kernel or task similarity measure $Q$ (i.e., $M=1$).
The primal thus becomes:
\begin{equation} 
\inf_{\W\in\Hilb}  \quad \frac{1}{2}\Vert W\Vert^2_{Q} ~+~ C\thinspace\sum_{i=1}^n \max\bracket{0,1-y_i\big\langle\w_{\tau(i)},\varphi(x_i)\big\rangle} \thinspace
\thinspace, \label{mtl_primal}\end{equation}
with corresponding dual 
\begin{equation}
\sup_{\zero\preceq\valpha\preceq\C} ~\thinspace -\frac{1}{2}\sum_{i,j=1}^n \alpha_i\alpha_jy_iy_j\tilde{k}(x_i,x_j) ~+~ \sum_{i=1}^n\alpha_i \thinspace,\label{mtl_dual}\end{equation}
where the definition of $\tilde k$ is given in Equation~\ref{eq:def-mtk}.
As we will see in the following, 
the above formulation captures several existing MTL approaches,
which can be expressed by choosing different encodings $Q$
for task similarity. 

\paragraph{Frustratingly Easy Domain Adaptation}\label{frust}

An appealing special case of Graph-regularized MTL was presented by \cite{daum}. They considered the setting of only two tasks (source task and target task), with a fix task relationship. Their \emph{frustratingly easy} idea was to assign a higher similarity to pairs of examples from the same task than between examples from different tasks. 
In a publication titled \emph{Frustratingly Easy Domain Adaptation}, \cite{daum} present a simple, yet appealing special case of graph-regularized MTL. 
They considered the setting of only two tasks (source task and target task), 
with a fix task relationship (i.e., the influence of the two tasks on each other was not determined by their actual similarity). 
Their idea was to assign a higher base-similarity to pairs of examples from the 
same task than between examples from different tasks. This may be expressed by the following multitask kernel:
$$ \tilde k(x,z) = \begin{cases} 2 k(x,z) \quad \tau(x) = \tau(z) \\ k(x,z) \quad \text{  else}\thinspace. \end{cases} $$

%
%
%
From the above, we can readily read off the corresponding 
$Q^{-1}$ (and compute $Q$).
$$Q^{-1} = 
\left( \begin{array}{cc}
2 & 1 \\
1 & 2 \end{array} \right),\quad\quad
Q = 
\left( \begin{array}{cc}
\frac{2}{3} & -\frac{1}{3} \\
-\frac{1}{3} & \frac{2}{3} \end{array} \right).
$$
Given the above, we can express this special case in terms of Equation~\eqref{mtl_primal} and \eqref{mtl_dual}.
With some elementary algebra, 
this method can be viewed as \emph{pulling}
weight vectors of source $\w_s$ and target $\w_t$ towards
a common mean vector $\bar \w$ by means of a regularization term.
If we generalize this idea to allow for multiple cluster
centers, we arrive at \emph{task clustering}, which is
described in the following.


\paragraph{Task Clustering Regularization}\label{cluster}

Here, tasks are grouped into $M$ clusters, whereas parameter vectors
of tasks within each cluster are pulled towards 
the respective cluster center $\bar \w_m = \frac{1}{T_m} \sum_{t=1}^{T_m} \w_t,$ 
where $T_m$ is the number of tasks in cluster $m$ \citep{evgeniou2006learning}.
To understand what $Q$ and $Q^{-1}$ correspond to in terms of
Equations~\ref{mtl_primal} and \ref{mtl_dual}, consider the definition
of the multitask regularizer $\Reg$ for task clustering.
\begin{align}\label{eq:cluster_reg}
   \hspace{-1cm}  R(\w_1, \ldots, \w_T) &= \frac{1}{2}\bracket{\sum_{t=1}^T \lambda \norm{\w_t}^2 +
\sum_{m=1}^M\bracket{\rho\norm{\bar \w_m}^2+\sum_{t=1}^T\rho_m^t\norm{\w_t-\bar\w_m}^2}} \\
   & =\frac{1}{2}\bracket{\sum_{t=1}^T \lambda \norm{\w_t}^2 + \sum_{s,t=1}^TG_{s,t}\langle\w_s,\w_t\rangle} \\ 
   & =\frac{1}{2}\tr\bracket{W(\lambda I + G)W^{\top}}\thinspace, 
\end{align}
%
%
where $M$ is the number of clusters, $\rho_m^t \geq 0$ encodes assignment
of task $t$ to cluster $m$, $\rho$ controls regularization
of cluster centers $\bar \w_m$ and $G$ are given by  
$$ G_{s,t} = \sum_{m=1}^M \left (\rho_m^t \delta_{st} - \frac{\rho_m^s \rho_m^t}{\rho + \sum_{r=1}^T \rho_m^r} \right ).$$
If any task $t$ is assigned to at least one cluster $m$ (i.e., $\forall t \exists m: \rho_m^t > 0$) $G$ is positive definite \citep{evgeniou2006learning}
and we can express the above in terms of our primal formulation
in Equation~\ref{mtl_primal} as $Q=(\lambda I + G)$ and the corresponding dual
as $Q^{-1}=(\lambda I + G)^{-1}$, even for $\lambda = 0$.
We note that the formulation given in Section~\ref{frust} may by expressed via
task clustering regularization, by choosing only one cluster (i.e., $M=1$)
and setting $\lambda = 0$, $\rho = 1$ and $\rho_1^{\text{source}} = \rho_1^{\text{target}} = 1$,
we get $G_{s,t} = \delta_{s,t} - \frac{1}{3}$, equating to the task similarity
matrix $Q$ from the previous section.
%

\paragraph{Graph-regularized MTL}\label{graphreg}

Graph-regularized MTL was established by \cite{evgeniou2006learning} and constitutes
one of the most influential MTL approaches to date. Their method is based on the following
multi-task regularizer, which also forms one of the main inspirations for our framework:
\begin{align}\label{eq:graphreg}
   \hspace{-1cm}  R(\w_1, ..., \w_T) &= \frac{1}{2}\bracket{\sum\nolimits_{t=1}^T\norm{\w_t}^2+\sum\nolimits_{s,t=1}^Ta_{st}\norm{\w_s-\w_t}^2} \\
   & =\frac{1}{2}\bracket{\sum\nolimits_{t=1}^T\norm{\w_t}^2  
         +\sum\nolimits_{s,t=1}^Tl_{s,t}\langle\w_s,\w_t\rangle} \\ 
   & =\frac{1}{2}\tr\bracket{W(I+L)W^{\top}}\thinspace, 
\end{align}
where $A =(a_{st})_{1\leq s,t\leq T}\in \mathbb{R}^{T\times T}$ is a given graph adjacency matrix encoding the pairwise similarities of the tasks,
$L = D - A$ denotes the corresponding graph Laplacian, where $D_{i,j}:= \delta_{i,j} \sum_k A_{i,k}$, and $I$ is a ${T\times T}$ identity matrix.
Note that the number of zero eigenvalues of the graph Laplacian corresponds to the number
of connected components.
We may view graph-regularized MTL as an instantiation of our general primal problem, Problem~\ref{prob:primal},
where we have only one task similarity measure $Q_1 = I + L$ (i.e., $M=1$).
As the graph Laplacian $L$ is not invertible in general, 
we use its pseudo-inverse $L^\dagger$ to express
the dual formulation of the above MTL regularizer.
\begin{align}
Q_{s,t}^{-1} = L_{s,t}^\dagger= \sum_{i=1}^r \sigma_i \v_{is}^T\v_{it},
\label{pseudo-inv}
\end{align}
where $r$ is the rank of $L$, $\sigma_i$ are the eigenvalues of $L$
and $V = (\v_{s,t})$ is the orthogonal matrix of eigenvectors.
%
%

%

\paragraph{Multi-task Kernels}\label{mtl_kernels}

In contrast to graph-regularized MTL, where task relations are captured
by an adjacency matrix or graph Laplacian as discussed in the previous paragraph,
task relationships may directly be expressed in terms of a kernel on tasks $K_{\textnormal{tasks}}$.
This relationship has been illuminated in Section~\ref{sec:rel-mkl},
where we have seen that the kernel on tasks corresponds to $Q^{-1}$ in our dual MTL formulation.
A formulation involving a combination of several MTL kernels with a fix
weighting was explored by \cite{Jacob2008}
in the context of Bioinformatics.
In its most basic form, the authors considered a multitask kernel of the form $$K((x, t), (z, s)) = K_{\textnormal{base}}(x,z) \cdot K_{\textnormal{tasks}}(t,s).$$
Furthermore, the authors considered a sum of different multi-task kernels, among them the corner cases $K_{\textnormal{Dirac}}(t,s) = \delta_{s,t}$ (independent tasks)
and the uniform kernel $K_{\textnormal{Uni}}(t,s) = 1$ (uniformly related tasks). 
In general, their dual formulation is given by
$$K((x, t), (z, s)) = \sum_{m=1}^M K_{\textnormal{base}}(x,z) \cdot K_{\textnormal{tasks}}^{(m)}(t,s).$$

The above is a very interesting special case and can easily be expressed
within our general framework. For this, consider the dual formulation
given in Equation~\ref{dual_hinge} for $Q^{(m)-1} = K_{\textnormal{tasks}}^{(m)}$
and $\theta_1 = \ldots = \theta_M = 1$. In other words, the above also
constitutes a form of multitask multiple kernel learning, however, without
actually learning the kernel weights $\Theta_m$. Nevertheless, the choice and discussion
of different multitask kernels $K_{\textnormal{tasks}}^{(m)}$ in
\cite{Jacob2008} is of high relevance with respect to the family of methods
explored in this work.

\subsection{Proposing Novel Instances of Multi-task Learning Machines}\label{extension}

We now move ahead and derive novel instantiations from our general framework. 
Most importantly, we go beyond previous formulations 
by learning or refining task similarities from data
using MKL \emph{as an engine}.

\begin{figure}[h!]
  \centering
  \subfigure[Multigraph MT-MKL]{
    \label{new_mutigraph}
    \includegraphics[width=0.90\textwidth]{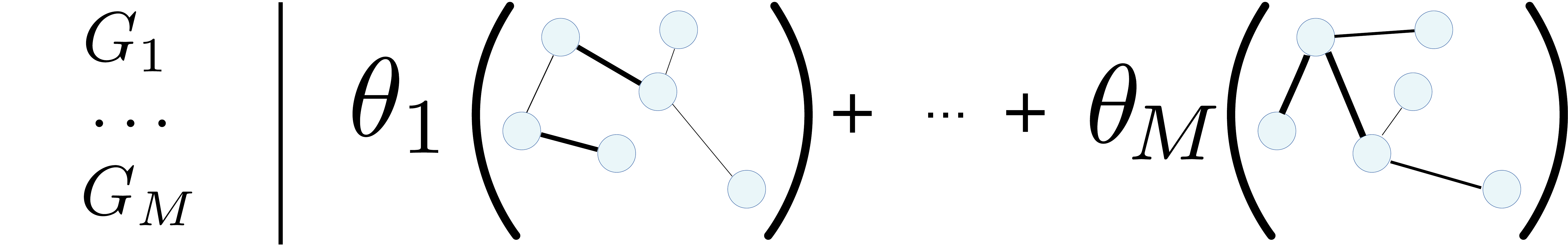}
  }
  \subfigure[Hierarchical MT-MKL]{
    \label{new_hmtmkl}
    \includegraphics[width=0.90\textwidth]{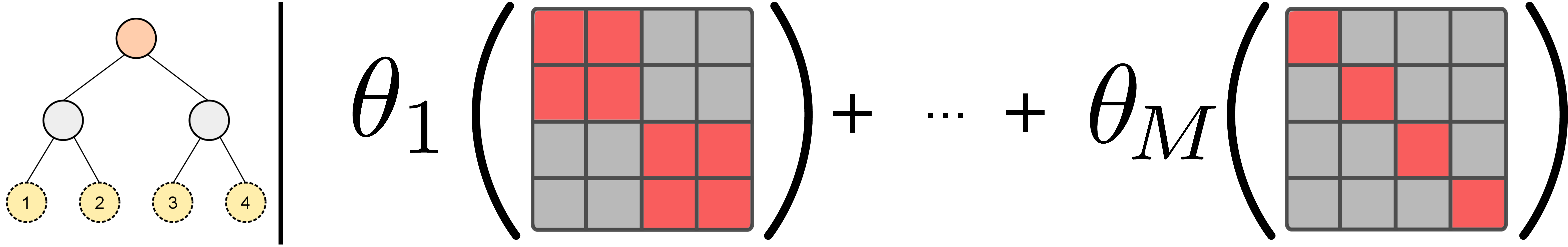}
  }  
  \subfigure[Smooth MT-MKL]{%
    \label{new_smooth}
    \includegraphics[width=0.90\textwidth]{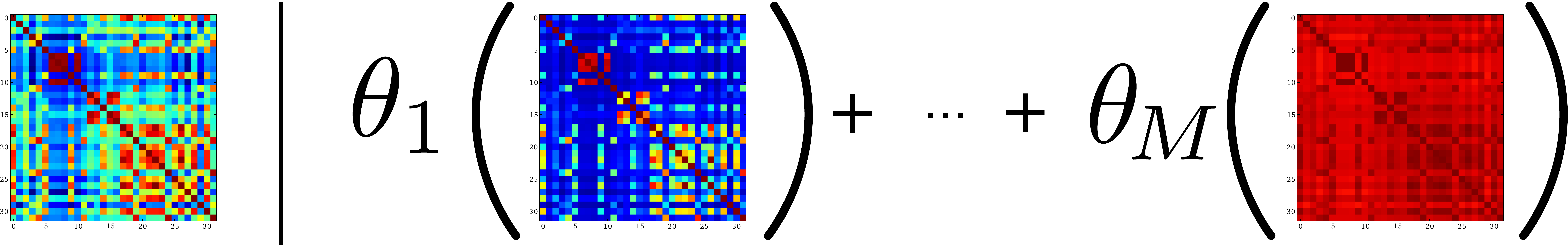}
  }  
  \caption{Learning additive transformations of task similarities: (a)
    Multigraph MT-MKL where one combines similarities from multiple
    independent graphs (which includes the approaches proposed in
    \citet{Widmer2010,Jacob2008}); (b) Hierarchical MT-MKL where one
    uses a tree to generate specific similarity matrices (as proposed
    in \citet{widmer2010leveraging,Widmer2010,Goe11,Widmer2012}); and
    (c) Smooth MT-MKL where one uses multiple transformations of an
    existing similarity matrix for linear
    combination. \label{new_mtmkl}}
\end{figure}

\subsubsection{Multi-graph MT-MKL}

One of the most popular MTL approaches is graph-regularized MTL
by \cite{Evgeniou2004}. We have seen in Section~\ref{sec:MTL},
that such a graph is expressed as a adjacency matrix $A$ and
may alternatively be expressed in terms of its graph Laplacian $L$.
Our extension readily deals with multiple graphs
encoding task similarity $A_m =(a_{mst})_{1\leq s,t\leq T}\in \mathbb{R}^{T\times T}$, which is of interest in cases where
- as in Multiple kernel learning - we have access to alternative
sources of task similarity and it is unclear which one is best suited.
This concept gives rise to the \emph{multi-graph MTL} regularizer
$$ R(\W)= \frac{1}{2}\tr\bracket{\sum\nolimits_{m=1}^MW_m(I+L_m)W_m^{\top}} \thinspace,$$
where $L_m$ denotes the graph Laplacian corresponding to $A_m$. 
As before, we learn a weighting of the given graphs, therefore
determining which measures are best suited to maximize
prediction accuracy.

\subsubsection{Hierarchical MT-MKL}\label{hmtmkl}

Recall that in task clustering, parameter vectors of tasks within the
same cluster are coupled (Equation~\ref{eq:cluster_reg}). The
\emph{strength} of that coupling, however, has be be chosen in advance
and remains fixed throughout the learning procedure. We extend the
formulation of task clustering by introducing a weighting $\theta_m$
to task cluster $m$ and tuning this weighting using our framework.
%
We decompose $G$ over clusters and arrive at the following MTL regularizer
\begin{align}\label{eq:weighted_cluster_reg}
   \hspace{-1cm}  R(\w_1, \ldots, \w_T) & =\frac{1}{2}\bracket{\sum\nolimits_{m=1}^M\norm{\w_m}^2  
         +\sum\nolimits_{m=1}^M \theta_m \sum\nolimits_{s,t=1}^TG_{s,t}^m\langle\w_s,\w_t\rangle} \\ 
   & =\frac{1}{2}\sum\nolimits_{m=1}^M\tr\bracket{ \theta_m W(I+G^m)W^{\top}}\thinspace, 
\end{align}
%
%
where $G^m$ is given by  
$$ G_{s,t}^m = \rho_m^t \delta_{st} - \frac{\rho_m^s \rho_m^t}{\rho + \sum_{r=1}^T \rho_m^r}.$$
Note that, if not all tasks belong to the same cluster, 
$G^m$ will not be invertible. Therefore, we need to 
express the mapping onto the dual of our general framework from Equation~\ref{dual_hinge}
in terms of the pseudo-inverse (see Equation~\ref{pseudo-inv})
of $G_m$: $Q_m^{-1} = G_m^{\dagger}$.

An important special case of the above is given
by a scenario where task relationships are described by a 
hierarchical structure $\mathcal G$ (see Figure~\ref{new_hmtmkl}), such as a tree or a directed acyclic graph.
Assuming hierarchical relations between tasks is particularly relevant to
Computational Biology where often different tasks correspond to different
organisms. In this context, we expect that the longer the common evolutionary
history between two organisms, the more beneficial it is to share information
between these organisms in a MTL setting.
The tasks correspond to the leaves or terminal nodes
and each inner node $n_m$ defines a cluster $m$,
by grouping tasks of all terminal nodes
that are descendants of the current node $n_m$.
As before, task clusters $G$ can be used in the way discussed
in the previous section.

\subsubsection{Smooth hierarchical MT-MKL}\label{smooth_MTMKL}
Finally, we present a variant that may be regarded as a smooth version of the
hierarchical MT-MKL approach presented above. Here, however, we require access
to a given task similarity matrix, which is then subsequently transformed
by squared exponentials with different length scales, for instance, $\mathbf Q^{(m)}_{st} = exp(A_{st} / \sigma_m)$.
We use MT-MKL to learn a weighting of the kernels associated with the different
length scales, which corresponds to finding the right level in the hierarchy to
trade off information between tasks.
As an example, consider Figure~\ref{new_smooth}, where we show the original
task similarity matrix and the transformed matrices at different length scales.
\section{Algorithms}\label{sec:op}

In this section, we present efficient optimization algorithms to solve the primal and dual problems, i.e., Problems~\ref{prob:primal}~and~\ref{prob:dual},
respectively. We distinguish the cases of linear and non-linear kernel matrices. 
For non-linear kernels, we can simply use existing MKL implementations, while, for linear kernels,
we develop a specifically tailored large-scale algorithm that allows us to 
train on problems with a large number of data points and dimensions, as demonstrated on several data sets.
We can even employ this algorithm for non-linear kernels,
if the kernel admits a sparse, efficiently computable feature representation. For example, this is the case 
for certain string kernels and polynomial kernels of degree 2 or 3. 
Our algorithms are embedded into the COFFIN framework \citep{coffin} and integrated  
into the SHOGUN large-scale machine learning toolbox \citep{sonnenburg2010shogun}.

\subsection{General Algorithms for Non-linear Kernels}\label{sec:wstep-mtkernel}

A very convenient way to numerically solve the proposed framework is to simply exploit existing MKL implementations.
To see this, recall from Section~\ref{sec:rel-mkl} that if we use the multi-task kernels $\tilde{k}_1,\ldots,\tilde{k}_M$ 
as defined in \eqref{eq:def-mtk} as the set of multiple kernels, 
the completely dualized MKL formulation (see Problem~\ref{prob:complete_dual}) is given by,
$$ \inf_{\vtheta\in\Theta_p} ~ \sup_{\valpha\in\R^n:\sum_{i=1}^n\alpha_iy_i=0} ~\thinspace -\frac{1}{2}\Bigg\Vert\Bigg(\sum_{i,j=1}^n \alpha_i\alpha_jy_iy_j\sum_{m=1}^M\theta_m\tilde{k}_m(x_i,x_j)\Bigg)_{1\leq m\leq M}\Bigg\Vert_{p^*} 
      ~-~ C\thinspace \L^*(-\valpha/C)  \thinspace.$$
%
An efficient optimization approach is by \cite{VarmaSMO}, who optimize the completely dualized MKL formulation.
This implementation comes along without a $\vtheta$-step, but any of the $\alpha_i$-steps 
computations of the $\alpha_i$-steps are  more costly as in the case of vanilla (MT-)SVMs.

Further, combining the partially dualized formulation in Problem~\ref{prob:dual} 
with the definition of multi-task kernels from \eqref{eq:def-mtk},
we arrive at an equivalent problem to \eqref{eq:prob-mtk}, that is,
$$ \inf_{\vtheta\in\Theta_p} ~ \sup_{\valpha\in\R^n} ~\thinspace -\frac{1}{2}\sum_{i,j=1}^n \alpha_i\alpha_jy_iy_j\sum_{m=1}^M\tilde{k}_m(x_i,x_j) 
      ~-~ C\thinspace \L^*(-\valpha/C)  \thinspace,$$
which is exactly the optimization problem of $\ell_p$-norm multiple kernel learning as described in \cite{KloBreSonZie11}.
We may thus build on existing research in the field of MKL and use one of the prevalent efficient implementations to solve $\ell_p$-norm MKL\@.
Most of the $\ell_p$-norm MKL solvers are specifically tailored to the hinge loss. Proven implementations are, for example, 
the interleaved optimization method of \cite{KloBreSonZie11},
which is directly integrated into the SVMLight module \citep{Joa99} of the SHOGUN toolbox 
such that the $\vtheta$-step is performed after each decomposition step, i.e., after solving the small QP occurring in SVMLight,
which allows very fast convergence \citep{SonRaeSchSch06}. 

For an overview of MKL algorithms and their implementations, see the survey paper by \cite{GonenA11}.

\subsection{A Large-scale Algorithm for Linear or String Kernels and Beyond}

For specific kernels such as linear kernels and string kernels---and, more generally, any kernel admitting an efficient
feature space representation---, we can derive a specifically tailored large-scale algorithm.
This requires considerably more work than the algorithm presented in the previous subsection.

\subsubsection{Overview}

From a top-level view, the upcoming algorithm underlies the core idea of alternating the following two steps:
\begin{enumerate}
    \item the $\vtheta$ step, where the kernel weights are improved
    \item the $\W$ step, where the remaining primal variables are improved.
\end{enumerate}
\begin{algorithm}[!h]
  \begin{algorithmic}[1]
  \small
  \STATE \textbf{input:} data $x_1,\ldots,x_n\in\X$ and labels ~$y_1,\ldots,y_n\in\{-1,1\}$ associated with tasks ~$\tau(1),\ldots,\tau(n)\in\{1,\ldots,T\}$; 
        feature vectors $\phi_1(x_i),\ldots,\phi_M(x_i)$; task similarity matrices $Q_1,\ldots,Q_M$; optimization precision $\varepsilon$
  \STATE initialize $\theta_m:=\sqrt[p]{1/M}$ for all $m=1,\ldots,M$, initialize $\W=\zero$
  \STATE \textbf{while} optimality conditions are not satisfied within tolerance $\epsilon$ {\bf do} \\[1pt]
    \STATE \qquad \emph{$\W$ descent step:} compute new $\W$ such that the obj. $\Reg_{\vtheta}(\W)  \thinspace+\thinspace C\thinspace\L(\W)$ decreases \label{line:W_step} 
    \STATE       \qquad\qquad\quad~~\thinspace\qquad\qquad $\W :=\argmin_{\widetilde{\W}}~\Reg_{\vtheta}(\widetilde{\W})  \thinspace+\thinspace C\thinspace\L(\widetilde{\W})$  \label{line:W_step_exact} \\[2pt]
    \STATE \qquad \emph{$\vtheta$ step:} compute minimizer $\vtheta:=\argmin_{\widetilde{\vtheta}\in\Theta_p}~\Reg_{\widetilde{\vtheta}}(\W)  \thinspace+\thinspace C\thinspace\L(\W)$  according to \eqref{eq:theta_update} \label{line:theta_step}
  \STATE \textbf{end while}
  \STATE \textbf{output:} $\epsilon$-accurate optimal hypothesis $\W$ and kernel weights $\vtheta$
  \end{algorithmic}
  \caption{\label{alg:blueprint} \textsc{(Blueprint of the large-scale optimization algorithm).}  The MKL module ($\theta$ step) is wrapped around the MTL module ($\W$ step).}
\end{algorithm}
These steps are illustrated in Algorithm Table~\ref{alg:blueprint}.
We observe from the table that the variables are split into the two
sets $\set{\theta_m|m=1,\ldots,M}$ and
$\set{\w_{mt}|m=1,\ldots,M,t=1,\ldots,T}$.  The algorithm then
alternatingly optimizes with respect to one or the other set until the
optimality conditions are approximately satisfied.  We will analyze
convergence of this optimization scheme later in this section.  Note
that similar algorithms have been used in the context of the group
lasso and multiple kernel learning by, for instance, \cite{RotFis08},
\cite{Xuetal10}, and \cite{KloBreSonZie11}.

\subsubsection{Solving the $\vtheta$ Step}

In this section, we discuss how to compute the update of the kernel weights $\vtheta$ as carried out in Line~\ref{line:theta_step} of Algorithm~\ref{alg:blueprint}.
Note that for fixed $\W\in\Hilb$ it holds
$$ \arginf_{\vtheta\in\Theta_p} \thinspace~ \Reg_{\vtheta}(\W)  \thinspace+\thinspace C\thinspace\L\big(A(\W)\big)  
    ~\thinspace=~\thinspace  \arginf_{\vtheta\in\Theta_p} \thinspace~ \Reg_{\vtheta}(\W) \thinspace, $$
where $\Reg_{\vtheta}(\W)  \thinspace=\thinspace \frac{1}{2}\sum_{m=1}^M\frac{\tr(W_mQ_mW_m)}{\theta_m} $. Furthermore, by Lagrangian duality,
\begin{align*}
  \hspace{0.5cm}& \hspace{-0.5cm} \inf_{\vtheta\in\Theta_p}  ~ \frac{1}{2}\sum_{m=1}^M\frac{\tr(W_mQ_mW_m)}{\theta_m}  
      \quad =\quad \max_{\lambda\geq0} ~~ \inf_{\vtheta\succeq\zero}  ~ \frac{1}{2}\sum_{m=1}^M\frac{\tr(W_mQ_mW_m)}{\theta_m} \thinspace+\thinspace \lambda\sum_{m=1}^M \theta_m^p \\
    &=~ \inf_{\vtheta\succeq\zero}  ~ \underbrace{\frac{1}{2}\sum_{m=1}^M\frac{\tr(W_mQ_mW_m)}{\theta_m} \thinspace+\thinspace \lambda^*\sum_{m=1}^M \theta_m^p}_{=:\psi(\vtheta)} 
                \thinspace,
\end{align*}
where we denote the optimal $\lambda$ in the above maximization by $\lambda^\star$. The infimum is either attained at the boundary of the constraints or when 
$\nabla_{\vtheta} \psi(\vtheta) = 0$,  thus the optimal point $\vtheta^\star$ satisfies $\theta_m^\star=\bracket{\tr(W_mQ_mW_m)/\lambda^\star}^{1/(p+1)}$ for any $m=1,\ldots,M$.
Because $\vtheta^\star\in\Theta_p$, i.e., $\norm{\vtheta}_p=1$, it follows $\lambda^\star=\bracket{\sum_{m=1}^M \tr(W_mQ_mW_m)^{p/(p+1)}}^{(p+1)/p}$, under the
minimal assumption that $\W\neq\zero$. Thus, because
$\tr(W_mQ_mW_m)=\sum_{s,t=1}^Tq_{mst}\inner{\w_{ms},\w_{mt}}$,
\begin{equation}\label{eq:theta_update}
    \forall m=1,\ldots,M: \quad \theta_m^\star=\frac{\sqrt[p+1]{\sum_{s,t=1}^Tq_{mst}\inner{\w_{ms},\w_{mt}}}}{\bracket{\sum_{m=1}^M\hspace{-0.22em}\sqrt[p+1]{\sum_{s,t=1}^Tq_{mst}\inner{\w_{ms},\w_{mt}}}^{\thinspace p}}^{1/p}} \thinspace.
\end{equation}

\subsubsection{Solving the $\W$ Descent Step}

To solve the $W$ step as carried out in Line~\ref{line:W_step} of Algorithm~\ref{alg:blueprint}, 
we consider the kernel weights $\set{\theta_m|m=1,\ldots,M}$ as being fixed and optimize solely with respect to $\W$.
In fact, we perform the $\W$ descent step in the dual, i.e., by optimizing the dual objective of 
Problem~\ref{prob:dual}, i.e., solving
$$ \sup_{\valpha\in\Real^n}  ~-\Reg_{\vtheta}^*(A^*(\valpha)) \thinspace-\thinspace C\thinspace \L^*(-(\valpha)/C) \thinspace.$$
Although our framework is also valid for other loss functions, for the presentation of the algorithm, 
we make a specific choice of a proven loss function, that is, the hinge loss $l(a)=\max(0,1-a)$, so that by \eqref{eq:dualloss_hinge}, the above task becomes
\begin{equation}\label{eq:algotask}
  \sup_{\valpha\in\Real^n: \zero\preceq\valpha\preceq\C}  ~-\Reg_{\vtheta}^*(A^*(\valpha)) \thinspace+\thinspace \sum_{i=1}^n \alpha_i \thinspace. 
\end{equation}
Our algorithm optimizes \eqref{eq:algotask} by dual coordinate ascent, i.e., by optimizing the dual variables $\alpha_i$ one after another 
(i.e., only a single dual variable $\alpha_i$ is optimized at a time),
$$ \sup_{d\in\Real:\thinspace0\leq\alpha_i+d\leq C}  ~~-\Reg_{\vtheta}^*(A^*(\valpha+d\e_i))  \thinspace+\thinspace \sum_{i=1}^n \alpha_i +d\thinspace,$$
where we denote the unit vector of $i$th coordinate in $\Real^n$ by $\e_i$.
As we will see, this task can be performed analytically; 
however, performed purely in the dual involves computing a sum over all support vectors which is infeasible for large $n$. 
Our proposed algorithm is, instead, based on the application of the representer theorem carried out in Section~\ref{sec:representer}:
recall from \eqref{eq:representer} that, for all $m=1,\ldots,M$ and $t=1,\ldots,T$, it holds 
$$\w_{mt} = \theta_m\sum_{i=1}^nq^{(-1)}_{m\tau(i)t}\alpha_iy_i\varphi_m(x_i) \thinspace.$$
The core idea is to express the update of the $\alpha_i$ in the coordinate ascent procedure solely in terms
of the vectors $\w_{mt}$. While optimizing the variables $\alpha_i$ one after another, we keep track of the 
changes in the vectors $\w_{mt}$. This procedure is reminiscent of the \emph{dual coordinate ascent} method,
but differs in the way the objective is computed.
Of course, this implies that we need to manipulate feature vectors, which explains why our approach relies on efficient infrastructure of 
storing and computing feature vectors and their inner products. If the infrastructure is adequate so that computing inner products in the feature space 
is more efficient than computing a row of the kernel matrix, our algorithm will have a substantial gain.

\paragraph{Expressing the update of a single variable $\alpha_i$ in terms of the vectors $\w_{mt}$}
As argued above, our aim is to express the (analytical) computation of
$$ \sup_{d\in\Real:\thinspace0\leq\alpha_i+d\leq C}  ~~-\Reg_{\vtheta}^*(A^*(\valpha+d\e_i))  \thinspace+\thinspace \sum_{i=1}^n \alpha_i+d \thinspace.$$
solely in terms of  the vectors $\w_{mt}$. To start the derivation, note that, by \eqref{eq:part_dual_constr},
$$ \Reg_{\vtheta}^*(A^*(\valpha+d\e_i)) ~=~ \frac{1}{2}\sum_{m=1}^M\theta_m\norm{A^*(\valpha+d\e_i)}^2_{Q_m^{-1}} $$
with, by \eqref{eq:mtk-norm-term}, 
$$ \norm{A^*(\valpha+d\e_i)}^2_{Q_m^{-1}} ~=~ \sum_{j,\tilde{j}=1}^n\alpha_j\alpha_{\tilde{j}}y_jy_{\tilde{j}}\tilde{k}_m(x_j,x_{\tilde{j}})
   \thinspace+\thinspace 2dy_i\sum_{j=1}^n\alpha_jy_j\tilde{k}_m(x_i,x_j) \thinspace+\thinspace d^2k_m(x_i,x_i) \thinspace, $$
where 
$$\tilde{k}_m(x_i,x_j)\thinspace=\thinspace q^{(-1)}_{m\tau(i)\tau(j)}k_m(x_i,x_j)$$
is the $m$th multi-task kernel as defined in \eqref{eq:def-mtk}. Thus,
\begin{eqnarray*}
  && \argsup_{d\in\Real:\thinspace0\leq\alpha_i+d\leq C}  ~-\Reg_{\vtheta}^*(A^*(\valpha+d\e_i))  \thinspace+\thinspace \sum_{i=1}^n \alpha_i+d  \\
  &=& \argsup_{d\in\Real:\thinspace0\leq\alpha_i+d\leq C} ~~  d\thinspace-\thinspace 
            dy_i\sum_{j=1}^n\alpha_jy_j\bracket{\sum\nolimits_{m=1}^M\theta_m\tilde{k}_m(x_i,x_j)} \thinspace-\thinspace \frac{1}{2}d^2\bracket{\sum\nolimits_{m=1}^M\theta_m\tilde{k}_m(x_i,x_i)} \\
  &=& \argsup_{d\in\Real:\thinspace0\leq\alpha_i+d\leq C} ~~ d\thinspace-\thinspace \underbrace{dy_i\bracket{\sum\nolimits_{m=1}^M\inner{\w_{m\tau(i)},\varphi_m(x_i)}} \thinspace-\thinspace \frac{1}{2}d^2\bracket{\sum\nolimits_{m=1}^M\theta_m\tilde{k}_m(x_i,x_i)}}_{=:\psi(d)} \thinspace. 
\end{eqnarray*}
The optimum of $\psi(d)$ is either attained at the boundaries of the
constraint $0\leq\alpha_i+d\leq C$ or when $\psi'(d)=0$.  Hence, the
optimal $d^\star$ can be expressed analytically as
\begin{equation}\label{eq:update}
  d^\star=\max\bracket{-\alpha_i\thinspace,~\min\bracket{C-\alpha_i\thinspace,\medspace\frac{1-y_i\sum\nolimits_{m=1}^M\theta_m\inner{\w_{m\tau(i)},\varphi_m(x_i)}}{\sum\nolimits_{m=1}^M\theta_m\tilde{k}_m(x_i,x_i)}}} \thinspace. 
\end{equation}
Whenever we update an $\alpha_i$ according to
$$ \alpha_i^{\rm new} := \alpha_i^{\rm old} + d^\star $$
with $d$ computed as in \eqref{eq:update}, we need to also update the vectors $\w_{mt}$, 
$m=1,\ldots,M$, $t=1,\ldots,T$,  according to 
\begin{equation}\label{eq:algo_update1}
  \w_{mt}^{\rm new} := \w_{mt}^{\rm old}+d\theta_mq^{(-1)}_{m\tau(i)t}y_i\varphi_m(x_i) \thinspace,
\end{equation}
to be consistent with \eqref{eq:representer}.  Similarly, we need to
update the vectors $\w_{mt}$ after each $\theta$ step according to
\begin{equation}\label{eq:algo_update2}
  \w_{mt}^{\rm new} := \bracket{\theta_m^{\rm new}/\theta_m^{\rm old}}\w_{mt}^{\rm old} \thinspace . 
\end{equation}
To avoid recurrences in the iterates, a $\vtheta$-step should only be
performed if the primal objective has decreased between subsequent
$\vtheta$-steps.  Thus, after each $\valpha$ epoch, the primal
objective needs to be computed in terms of $\W$. As described above,
the algorithm keeps $\W$ up to date when $\valpha$ changes, which
makes this task particular simple.

The resulting large-scale algorithm is summarized in
Algorithm~Table~\ref{alg:mtl-libsvm}.  Data and the labels are input
to the algorithm as well as a sub-procedure for efficient computation
of feature maps (cf.\ Section~\ref{sec:coffin}).  Lines 2 and 3
initialize the optimization variables. In Line~4 the inverses of the
task similarity matrices are pre-computed.
Algorithm~\ref{alg:mtl-libsvm} iterates over Lines 7--16 until the
stopping criterion falls under a pre-defined accuracy threshold
$\varepsilon$.  In Lines 7--11 the line search is computed for all
dual variables. Lines 14 and 15 update the primal variables and kernel
weights to be consistent with the representer theorem, only if the
primal objective has decreased since the last $\vtheta$-step.  We stop
Algorithm~\ref{alg:mtl-libsvm} when the relative change in the
objective $o$ is less than $\epsilon$. Notice that we do not optimize
the $W$ step to full precision, but instead alternate between one pass
over the $\alpha_i$ and a $\vtheta$ step.
\begin{algorithm}[!h]
  \begin{algorithmic}[1]
  \small
  \STATE \textbf{input:} data $x_1,\ldots,x_n\in\X$ and labels ~$y_1,\ldots,y_n\in\{-1,1\}$ associated with tasks ~$\tau(1),\ldots,\tau(n)\in\{1,\ldots,T\}$; efficiently computable feature maps $\varphi_1,\ldots,\varphi_M$; task similarity matrices $Q_1,\ldots,Q_M$; optimization precision $\varepsilon$
  \STATE for all $i\in\{1,\ldots,n\}$ initialize $\alpha_i=0$ 
    \STATE for all $m\in\{1,\ldots,M\}$ and $t\in\{1,\ldots,T\}$, initialize $\w_{mt}$ according to \eqref{eq:representer} 
    \STATE for all $m\in\{1,\ldots,M\}$, compute inverse $Q_m^{-1}=\big(q_{mst}^{(-1)}\big)_{1\leq s,t\leq T}$
    \STATE initialize primal objective $o=nC$
  \STATE \textbf{while} optimality conditions are not satisfied {\bf do}
  \STATE \qquad \textbf{for} all $i\in\{1,\ldots,n\}$ 
  \STATE \qquad\qquad compute $d$ according to \eqref{eq:update}
    \STATE \qquad\qquad update $\alpha_i:=\alpha_i+d$ 
  \STATE \qquad\qquad for all $m\in\{1,\ldots,M\}$ and $t\in\{1,\ldots,T\}$, update $\w_{mt}$ according to \eqref{eq:algo_update1}
  \STATE \qquad \textbf{end for}
    \STATE \qquad store primal objective $o^{\rm old}=o$ and compute new primal objective $o$
    \STATE \qquad \textbf{if} primal objective has decreased, i.e., $o<o^{\rm old}$
    \STATE \qquad\qquad for all $m\in\{1,\ldots,M\}$, compute $\theta_m$ from $\w_{m1},\ldots,\w_{mT}$ according to \eqref{eq:theta_update}
    \STATE \qquad\qquad for all $m\in\{1,\ldots,M\}$ and $t\in\{1,\ldots,T\}$, update $\w_{mt}$ according to \eqref{eq:algo_update2}
    \STATE \qquad \textbf{end if}
  \STATE \textbf{end while}
  \STATE \textbf{output:} $\epsilon$-accurate optimal hypothesis $\W=(\w_{mt})_{1\leq m\leq M,1\leq t\leq T}$ and kernel weights $\vtheta=(\theta_m)_{1\leq m\leq M}$
  \end{algorithmic}
  \caption{\label{alg:mtl-libsvm} \textsc{(Dual-coordinate-ascent-based MTL training algorithm).}  Generalization of the LibLinear training algorithm to multiple tasks and multiple linear kernels.}
\end{algorithm}

\subsubsection{Details on the Implementation}\label{sec:coffin}

We have implemented the optimization algorithms described in the
previous section into the general framework of the SHOGUN machine
learning toolbox \citep{sonnenburg2010shogun}.  Besides the described
implementations for binary classification, we also provide
implementations for novelty detection and regression.  Furthermore,
the user may choose an optimization scheme, that is, decide whether
one of the classic, non-linear MKL solvers shall be used (either the
analytic optimization algorithm of \cite{KloBreSonZie11}, the cutting
plane method of \cite{SonRaeSchSch06}, or the Newton algorithm by
\cite{KloBreSonZieLasMue09}), or the novel implementation for
efficiently computable feature maps.  Our implementation can be
downloaded from \url{http://www.shogun-toolbox.org}.

In the more conventional family of approaches, the \emph{wrapper
  algorithms}, an optimization scheme on $\vtheta$ wraps around a
conventional SVM solver (for instance, LIBSVM and SVMLIGHT are
integrated into SHOGUN) using a single multi-task kernel.
Effectively, this results in alternatingly solving for $\valpha$ and
$\vtheta$.  For the $\vtheta$-step, SHOGUN offers the three choices
listed above.  The second, much faster approach performs interleaved
optimization and thus requires modification of the core SVM
optimization algorithm. This is currently either integrated into the
chunking-based SVRlight and SVMlight module. Lastly, the completely
new optimization scheme as described in
Algorithm~Table~\ref{alg:mtl-libsvm} is implemented and connected with
the module for computing the $\vtheta$-step.

Note that the implementations for non-linear kernels come with the
option of either pre-computing the kernel or computing the kernel on
the fly for large-scale data sets. For truly large-scale MT-MKL, a
linear or string kernel should be used.  This is implemented as an
internal interface the COFFIN module of SHOGUN \citep{coffin}.

\subsection{Convergence Analysis}\label{sec:converg}

In this section, we establish convergence of
Algorithm~\ref{alg:blueprint} under mild assumptions.  To this end, we
build on the existing theory of convergence of the block coordinate
descent method.  Classical results usually assume that the function to
be optimized is strictly convex and continuously differentiable.  This
assertion is frequently violated in machine learning when, for
instance, the hinge loss is employed.  In contrast, we base our
convergence analysis on the work of \cite{Tseng2001} concerning the
convergence of the block coordinate descent method.  The following
proposition is a direct consequence of Lemma 3.1 and Theorem 4.1 in
\cite{Tseng2001}.

\begin{proposition}\label{prop:tseng}
  Let $f:\mathbb R^{d_1+\cdots+d_R}\rightarrow\mathbb R\cup\{\infty\}$ be a function.   Put $d=d_1+\cdots+d_R$.
    Suppose that $f$ can be decomposed into $f(\a_1,\ldots,\a_r) = f_0(\a_1,\ldots,\a_r) + \sum_{r=1}^Rf_r(\a_r)$
    for some $f_0:\mathbb R^d\rightarrow\mathbb R\cup\{\infty\}$ and $f_r:\mathbb R^{d_r}\rightarrow\mathbb R\cup\{\infty\}$, 
  $r=1,\ldots,R$.
  Initialize the block coordinate descent method by $\a^0=(\a^0_1,\ldots,\a^0_R)$. Let $(r_k)_{k\in\mathbb N}\subset\{1,\ldots,R\}$ be a sequence of
    coordinate blocks. Define the iterates $\a^k=(\a^k_1,\ldots,\a^k_R)$, $k>0$, by
    \begin{equation}\label{eq:sequence}
      \a_{r_k}^{k+1}\in\argmin_{\mathfrak A \in\mathbb R^{d_{r_k}}} f\big(\a_1^{k+1},\cdots,\a_{r_k-1}^{k+1},\mathfrak A,\a_{r_k+1}^{k},
         \cdots,\a_{R}^{k}\big) \thinspace, ~~ \a_{r}^{k+1}:=\a_{r}^k\thinspace, ~~ r\neq r_k\thinspace, ~~ k\in\mathbb N_0\thinspace. 
  \end{equation}
  Assume that 
    \begin{itemize}
      \item[(A1)] $f$ is convex and proper (i.e., $f\not\equiv\infty$)
      \item[(A2)] the sublevel set $\mathcal A^0:=\{\a\in\mathbb R^d: f(\a)\leq f(\a^0)\}$ is compact and $f$ is continuous on $\mathcal A^0$ \\
        (\textsc{assures existence of minimizer in \eqref{eq:sequence}})
        \item[(A3)] $\dom(f_0):=\{a\in\mathbb R^d:f_0(\a)<\infty\}$ is open and $f_0$ is G\^ateaux differentiable (for instance, \emph{continuously differentiable}) on $\dom(f_0)$ \\
        (\textsc{yields regularity---i.e., any coordinate-wise minimum is a minimum of $f$})
        \item[(A4)] it exists a number $T\in\mathbb N$ so that, for each $k\in\mathbb N$ and $r\in\{1,\ldots,R\}$, there is $\tilde{k}\in\{k,\ldots,k+T\}$ with $r_{\tilde{k}}=r$\thinspace. \\
        (\textsc{ensures that each coordinate block is optimized ``sufficiently often''})
    \end{itemize}
  Then the minimizer in \eqref{eq:sequence} exists and any cluster point of the sequence $(\a^k)_{k\in\mathbb N}$ minimizes $f$ over $\mathcal A$.
\end{proposition}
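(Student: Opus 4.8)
The plan is to reduce the statement directly to the general block-coordinate-descent convergence theory of \cite{Tseng2001}, by verifying that our assumptions (A1)--(A4) match, in the exact form required there, the hypotheses of its Lemma~3.1 and Theorem~4.1 (applied with $N=R$ blocks and the separable decomposition $f=f_0+\sum_{r=1}^R f_r$). First I would record the elementary \emph{descent property}: since $\a^{k+1}$ is obtained by exact minimization over a single block while the remaining blocks are held fixed, the sequence $\big(f(\a^k)\big)_k$ is nonincreasing, whence $\a^k\in\mathcal A^0$ for every $k$. Compactness of $\mathcal A^0$ from (A2) then guarantees both that $(\a^k)$ admits at least one cluster point, and that each block subproblem in \eqref{eq:sequence} attains its infimum: the relevant feasible slice is the image under coordinate projection of the intersection of $\mathcal A^0$ with an affine subspace, hence compact, and $f$ is continuous on it by (A2). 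This settles the existence assertion.

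The crux is to invoke Tseng's two ingredients in sequence. Lemma~3.1 of \cite{Tseng2001} establishes \emph{regularity} of the objective under precisely the separable structure $f=f_0+\sum_r f_r$ together with G\^ateaux differentiability of the coupling term $f_0$ on its open domain---which is exactly what (A3) supplies. Regularity means that every \emph{coordinatewise minimum} of $f$ (a point that cannot be improved by optimizing any single block in isolation) is a stationary point of $f$; this is the property that generically fails for nonsmooth objectives and is rescued only by the separable-plus-differentiable structure. Theorem~4.1 of \cite{Tseng2001} then takes over: under the essentially cyclic block-selection rule (A4), the compact level sets and continuity from (A2), and the monotone descent just established, every cluster point of $(\a^k)$ is a coordinatewise minimum of $f$.

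It remains to upgrade ``coordinatewise minimum'' to ``global minimum.'' Here properness and convexity (A1) close the gap: for a convex $f$ a stationary point (in Tseng's sense, i.e.\ a point admitting no feasible descent direction) is automatically a global minimizer, and by the regularity furnished by Lemma~3.1 the coordinatewise minima identified in Theorem~4.1 are exactly such stationary points. Chaining these implications yields that any cluster point minimizes $f$ over $\mathcal A^0$, and since the minimizers lie inside the sublevel set this is a global minimum over $\mathbb R^d$, as claimed.

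The main obstacle I anticipate is not any single computation but the faithful matching of hypotheses: one must check, in the precise quantifier form stated in \cite{Tseng2001}, that the possibly nonsmooth separable pieces $f_r$---which in our application encode the box constraints $0\le\alpha_i\le C$ of the dual hinge-loss problem \eqref{eq:algotask}---are admissible for both results, and that openness of $\dom(f_0)$ is what prevents spurious boundary artifacts from being mistaken for coordinatewise minima. Verifying the regularity hypothesis of Lemma~3.1 is the delicate step, and everything else is bookkeeping around the monotone descent and the compactness of $\mathcal A^0$.
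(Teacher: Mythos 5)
Your proposal is correct and follows exactly the route the paper takes: the paper states this proposition as a direct consequence of Lemma~3.1 and Theorem~4.1 of \cite{Tseng2001}, and your argument simply makes that reduction explicit (descent property and compactness of $\mathcal A^0$ for existence and cluster points, Lemma~3.1 for regularity via (A3), Theorem~4.1 for cluster points being coordinatewise minima under the essentially cyclic rule (A4), and convexity (A1) to upgrade stationarity to global optimality). Your hypothesis-matching is faithful, so there is nothing to add beyond what the paper's citation already asserts.
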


\begin{corollary}
  Assume that
    \begin{itemize}[label=\wackyenum*]
        \item[(B1)] the data is represented by $\phi_m(x_i)\in\R^{e_m}$, $i=1,\ldots,n$, $e_m<\infty$, $m=1,\ldots,M$.
    \item[(B2)] the loss function $l$ is convex, finite in 0, and continuous on its domain $\dom(l)$
        \item[(B3)] the task similarity matrices $Q_1,\ldots,Q_T$ are positive definite
      \item[(B3)] any iterate $\vtheta=(\theta_1,\ldots,\theta_M)$ traversed by Algorithm~\ref{alg:blueprint} has $\theta_m>0$, $m=1,\ldots,M$
        \item[(B4)] the exact search specified in Line~\ref{line:W_step_exact} of Algorithm~\ref{alg:blueprint} is performed
    \end{itemize}
    Then Algorithm~\ref{alg:blueprint} is well-defined and any cluster point of the sequence traversed by the Algorithm~\ref{alg:blueprint} 
    is a minimal point of Problem~\ref{prob:primal}.
\end{corollary}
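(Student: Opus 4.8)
The plan is to recognize Algorithm~\ref{alg:blueprint} as an instance of the two-block coordinate descent scheme of Proposition~\ref{prop:tseng} (with $R=2$) and to verify its hypotheses (A1)--(A4) from the structural conditions (B1)--(B4). By (B1) each $\Hilb_m=\R^{e_m}$ is finite dimensional, so the optimization variable $(\vtheta,\W)$ lives in $\R^{d}$ with $d=M+T\sum_{m}e_m$; this is precisely what is needed to land in the setting of Proposition~\ref{prop:tseng}. I would take the first coordinate block to be $\vtheta\in\R^M$ and the second to be $\W\in\R^{T\sum_m e_m}$, and define the objective
\[
 f(\vtheta,\W)\;:=\;\Reg_{\vtheta}(\W)\;+\;C\thinspace\L\big(A(\W)\big)\;+\;\iota_{\Theta_p}(\vtheta),
\]
where $\iota_{\Theta_p}$ is the $\{0,\infty\}$-indicator of the feasible set $\Theta_p$. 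The decomposition demanded by the proposition is then $f_0(\vtheta,\W)=\Reg_{\vtheta}(\W)$ (the only term coupling the two blocks), $f_1(\vtheta)=\iota_{\Theta_p}(\vtheta)$, and $f_2(\W)=C\L(A(\W))$. Matching the algorithm to \eqref{eq:sequence} uses that both per-block subproblems are solved exactly: the $\vtheta$-step has the closed form \eqref{eq:theta_update}, and the $\W$-step is an exact minimization by (B4); the strictly alternating schedule trivially satisfies (A4) with window length $T=2$.

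Next I would discharge the convexity hypothesis (A1). The loss part $f_2$ is convex by (B2) and the indicator $f_1$ is convex because $\Theta_p$ is convex, so the only real content is the joint convexity of $f_0$ in $(\vtheta,\W)$. Here I would invoke positive definiteness of the $Q_m$ from the first (B3): each quadratic form $W_m\mapsto\norm{W_m}_{Q_m}^2=\tr(W_mQ_mW_m^*)$ is then convex, and $\tfrac12\norm{W_m}_{Q_m}^2/\theta_m$ is precisely its \emph{perspective} in the extra variable $\theta_m$, hence jointly convex on $\{\theta_m>0\}$; summing over $m$ gives joint convexity of $f_0$. Properness of $f$ follows since $f(\vtheta^0,\zero)=Cn\,l(0)<\infty$ at the initialization $\W=\zero$ with $\vtheta^0\in\Theta_p$. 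The existence of the minimizer in \eqref{eq:sequence} follows for the $\vtheta$-block from the closed form and for the $\W$-block from coercivity, established next.

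For the compactness part of (A2) I would argue that the sublevel set $\mathcal A^0=\{f\le f(\vtheta^0,\W^0)\}$ is bounded and closed. Boundedness of the $\vtheta$-component is immediate from $\Theta_p$ being bounded, and for the $\W$-component I would use that on $\Theta_p$ one has $\theta_m\le\norm{\vtheta}_p\le1$, so $f_0\ge\tfrac12\sum_m\norm{W_m}_{Q_m}^2\ge\tfrac12\sum_m\lambda_{\min}(Q_m)\norm{W_m}^2$ with $\lambda_{\min}(Q_m)>0$ by (B3); together with the standing lower bound on $l$, this makes $f$ coercive in $\W$ and hence bounds $\mathcal A^0$. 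For (A3) I would set $\dom(f_0)=\{\vtheta:\theta_m>0,\,m=1,\dots,M\}\times\R^{T\sum_m e_m}$, which is open, and on which $f_0$ is $C^\infty$ (a smooth quadratic divided by a nonvanishing $\theta_m$), so in particular G\^ateaux differentiable. With (A1)--(A4) in hand, Proposition~\ref{prop:tseng} yields that every cluster point of the sequence generated by Algorithm~\ref{alg:blueprint} minimizes $f$; since minimizing $f$ over $\R^d$ is exactly Problem~\ref{prob:primal}, the claim follows.

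The delicate point---and the step I expect to be the main obstacle---is the interaction between (A2) and (A3) at the degenerate boundary $\theta_m=0$. The regularizer is genuinely discontinuous there (e.g.\ along $\theta_m=\norm{W_m}^2\to0$ the term $\norm{W_m}_{Q_m}^2/\theta_m$ does not vanish despite the declared convention $0/0:=0$ at the point itself), so $f$ is only lower semicontinuous, not continuous, on all of $\Theta_p\times\R^{T\sum_m e_m}$, and $\mathcal A^0$ need not be contained in the open set $\dom(f_0)$. This is exactly where the second condition (B3) ($\theta_m>0$ for every iterate) and (B4) enter: they confine the trajectory to the interior region where $f_0$ is smooth and $f$ is continuous, so that the hypotheses of Proposition~\ref{prop:tseng} can be checked on this restricted domain. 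I would therefore verify (A2)/(A3) not on the full feasible set but on a slice $\{\theta_m\ge\delta\}$ with a $\delta>0$ furnished by the trajectory, using coercivity to keep $\W$ bounded and the interiority guarantee to keep continuity and openness simultaneously valid; reconciling these two requirements cleanly is the crux of the argument.
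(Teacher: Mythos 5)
Your overall strategy coincides with the paper's: cast Algorithm~\ref{alg:blueprint} as two-block coordinate descent and verify (A1)--(A4) of Proposition~\ref{prop:tseng}. Your convexity argument via the perspective function and your coercivity bound with $\lambda_{\min}(Q_m)$ are fine (the latter is in fact cleaner than the trace-based bound the paper uses). However, the difficulty you flag at the end is a genuine gap in your argument, and your proposed repair does not close it. With your decomposition, $f_0(\vtheta,\W)=\Reg_{\vtheta}(\W)$ under the conventions $1/0:=\infty$, $0/0:=0$, the set $\dom(f_0)$ is \emph{not} open: it contains every point with $\theta_m=0$ and $W_m=\zero$ (and also points with negative $\theta_m$), so (A3) fails as stated, and the continuity requirement in (A2) also fails at such points, since $\Vert W_m\Vert^2_{Q_m}/\theta_m$ does not tend to $0$ along, e.g., $\theta_m=\Vert W_m\Vert^2\to0$. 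Your fix---verifying (A2)/(A3) on a slice $\{\theta_m\geq\delta\}$ with ``$\delta$ furnished by the trajectory''---does not work: assumption (B3) only gives $\theta_m>0$ at each iterate, not $\inf_k\theta_m^k>0$, so no such $\delta$ exists in general; moreover, even granting a slice, Proposition~\ref{prop:tseng} applied to the restricted function would only yield that cluster points minimize the objective over the slice, and the algorithm's $\vtheta$-subproblem is a minimization over all of $\Theta_p$, not over the slice, so the iterates would no longer match \eqref{eq:sequence}.

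The paper resolves exactly this point by a different placement of the constraints in the decomposition: it sets $f_0(\W,\vtheta):=\frac{1}{2}\sum_{m}\Vert W_m\Vert^2_{Q_m}/\theta_m + I_{\{\vtheta\succ\zero\}}(\vtheta)$, i.e., the open positivity constraint is absorbed into $f_0$, while only the norm-ball indicator $I_{\{\norm{\vtheta}_p\leq1\}}$ is kept as the block-separable term in $\vtheta$ (the loss being the block-separable term in $\W$). With this choice, $\dom(f_0)=\{(\W,\vtheta):\vtheta\succ\zero\}$ is open and $f_0$ is continuously differentiable there, so (A3) holds verbatim, and the degenerate boundary points that break continuity are excluded from $\dom(f)$ altogether rather than needing to be fenced off by an ad hoc slice. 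If you adopt this decomposition, the rest of your argument---properness at the initialization, perspective-function convexity, boundedness of the sublevel set via $\theta_m\leq1$ and $\lambda_{\min}(Q_m)>0$, and the trivial verification of (A4)---goes through essentially unchanged.
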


\begin{proof}
  The corollary is obtained by applying Proposition~\ref{prop:tseng} to Problem~\ref{prob:primal}, that is,
    \begin{equation}\label{eq:prob1} 
      \inf_{\W,\thinspace\vtheta:~\vtheta\in\Theta_p} ~\frac{1}{2}\sum_{m=1}^M\frac{\Vert W_m\Vert^2_{Q_m}}{\theta_m} 
          \thinspace+\thinspace C\sum_{i=1}^nl\bigg(y_i\sum_{m=1}^M\big\langle\w_{m\tau(i)},\varphi_m(x_i)\big\rangle\bigg) \thinspace, 
    \end{equation}
    where $\Theta_p=\{\vtheta\in\mathbb R^M: \theta_m\geq0, m=1,\ldots,M, \norm{\vtheta}_p\leq1\}$ and, by (B1), $\W\in\mathbb R^{eT}$, $e=e_1+\cdots+e_M$.
    Note that \eqref{eq:prob1} can be written unconstrained as
    \begin{equation}\label{eq:prob2}
      \inf_{\W,\thinspace\vtheta} ~ f(\W,\vtheta)  \thinspace, \quad \text{\rm where} ~~ f(\W,\vtheta) ~:=~ f_0(\W,\vtheta) + f_1(\W) + f_2(\vtheta)\thinspace,
  \end{equation}
  by putting 
    $$ f_0(\W,\vtheta) ~:=~ \frac{1}{2}\sum_{m=1}^M\frac{\Vert W_m\Vert^2_{Q_m}}{\theta_m} \thinspace+\thinspace I_{\{\vtheta\succ\zero\}}(\vtheta)     $$
    as well as
    \begin{equation}\label{def:f12}
       f_1(\W)~:=~C\sum_{i=1}^nl\bigg(y_i\sum_{m=1}^M\big\langle\w_{m\tau(i)},\varphi_m(x_i)\big\rangle\bigg)\thinspace, \quad 
         f_2(\vtheta)~:=~I_{\{\norm{\vtheta}_p\leq1\}}(\vtheta)\thinspace,
    \end{equation}
    where $I$ is the indicator function, $I_S(s)=0$ if $s\in S$ and $I_S(s)=\infty$ elsewise. 
    Note that we use the shorthand $\vtheta\succ\zero$ for $\theta_m>0$, $m=1,\ldots,M$.

  Assumption (B4) ensures that applying the block coordinate descent method to \eqref{eq:prob1} 
    and \eqref{eq:prob2} problems yields precisely the sequence of iterates.
    Thus, in order to prove the corollary, it suffices to validate that \eqref{eq:prob2} fulfills Assumptions (A1)--(A4) in Proposition~\ref{prop:tseng}.

  \smallskip
    \textsc{Validity of (A1)}~~
  Recall that Algorithm~\ref{alg:blueprint} is initialized with $\W^0=\zero$ and $\theta_m^0=\sqrt[p]{1/M}$, $m=1,\ldots,M$, so it holds
    \begin{equation}\label{eq:finite}
       f(\W^0,\vtheta^0) ~=~ \underbrace{f_0(\W^0,\vtheta^0)}_{=0} ~+ \underbrace{f_1(\W^0)}_{=Cn\thinspace\loss(0)} 
         +~  \underbrace{f_2(\vtheta^0 )}_{=0} ~=~Cn\thinspace\loss(0)~<~\infty \thinspace, 
  \end{equation}
  hence $f\not\equiv\infty$, so $f$ is proper. Furthermore, $\dom(f_0)=\{(\W,\vtheta): \vtheta\succ\zero\}$ is convex, and $f_0$ is convex on 
    $\dom(f_0)$, so $f_0$ is a convex function. By (B2), the loss function $l$ is convex, so $f_1$ is a convex function. The domain 
    $\dom(f_2)=\{\vtheta: \norm{\vtheta}_p\leq1\}$ is convex, and $f_2\equiv0$ on its domain, so $f_2$ is a convex function.
    Thus the sum $f=f_0+f_1+f_2$ is a convex function, which shows (A1).

  \smallskip
    \textsc{Validity of (A2)}~~
    Let $(\widetilde{\W},\widetilde{\vtheta})\in\mathcal A^0:=\{(\W,\vtheta):f(\W,\vtheta)\leq f(\W^0,\vtheta^0)\}$. 
    We have $f_0,f_1,f_2\geq0$, so, for all $m=1,\ldots,M$,
    \begin{align}\label{eq:conv_aux}
     \begin{split}
       \frac{\Vert\widetilde{W}_m\Vert_{Q_m}}{2\theta_m} &~\leq~ f_0(\widetilde{\W},\widetilde{\vtheta}) ~\leq~ f_0(\widetilde{\W},\widetilde{\vtheta}) + \underbrace{f_1(\widetilde{\W})}_{\geq0} 
        + \underbrace{f_2(\widetilde{\vtheta})}_{\geq0} \thinspace\stackrel{\rm by\thinspace\eqref{eq:prob2}}{=}\thinspace f(\widetilde{\W},\widetilde{\vtheta}) 
            \\
       & ~\leq~ f(\W^0,\vtheta^0)  \thinspace\stackrel{\rm by\thinspace\eqref{eq:finite}}{\leq}\thinspace Cn\thinspace\loss(0)\thinspace,
        \end{split}
  \end{align}
    which implies $\Vert\widetilde{W}_m\Vert^2_{Q_m}\leq 2\theta_mCn\thinspace\loss(0)$. 
    Similar, because $f_0\geq0$, we have
    $f_2(\widetilde{\W},\widetilde{\vtheta})\leq Cn\thinspace\loss(0)<\infty$, which, by \eqref{def:f12}, implies 
    $\Vert\widetilde{\vtheta}\Vert_p\leq1$ and thus $\widetilde{\theta}_m\leq1$, $m=1,\ldots,M$. Hence, by \eqref{eq:conv_aux}, 
    $\Vert \widetilde{W}_m\Vert^2_{Q_m}\leq\thinspace 2Cn\thinspace\loss(0)$, $m=1,\ldots,M$.
    Because $Q_1,\ldots,Q_M$ are positive definite, $\nu:=\min_{m=1,\ldots,M}\tr(Q_m)>0$.
    Thus, for any $m=1,\ldots,M$, 
    \begin{eqnarray*}
       \Vert\widetilde{W}_m\Vert^2 &=& \tr(\widetilde{W}_m^*\widetilde{W}_m) 
                ~ =~ \tr(\widetilde{W}_m^*\widetilde{W}_m)\tr(Q_m)/\tr(Q_m) ~\leq~ \tr(\widetilde{W}_m^*W_mQ_m)/\tr(Q_m) \\
          &\leq& \nu^{-1}\thinspace\tr(\widetilde{W}_m^*\widetilde{W}_mQ_m) ~=~ \nu^{-1}\thinspace\tr(\widetilde{W}_mQ_m\widetilde{W}_m^*) 
                  ~=~ \nu^{-1}\thinspace\Vert \widetilde{W}_m\Vert_{Q_m}^2 ~\leq~ 2\nu^{-1} Cn\thinspace\loss(0)  \thinspace. 
  \end{eqnarray*}
    Thus 
    $$ \Vert(\widetilde{\W},\widetilde{\vtheta})\Vert^2 ~=~ \Vert\widetilde{\W}\Vert^2 + \Vert\widetilde{\vtheta}\Vert^2 ~\leq~ 2\nu^{-1} CMn\thinspace\loss(0) + M ~<~ \infty\thinspace. $$
    Thus $\sup_{(\widetilde{W},\widetilde{\vtheta})\in\mathcal A^0}\Vert(\widetilde{W},\widetilde{\vtheta})\Vert<\infty$, which shows
    that $\mathcal A^0$ is bounded. Furthermore, $\mathcal A^0\subset\dom(f)=\dom(f_0)\cap\dom(f_1)\cap\dom(f_2)$ and $f_0,f_1,f_2$ are continuous on their respective domains.
    Thus $f$ is continuous on $\dom(f)$ and thus also on its subset $\mathcal A^0$. It holds $\mathcal A^0 = f^{-1}\big(]-\infty,f(\W^0,\vtheta^0)]\big)$, i.e., $\mathcal A^0$ is the
    preimage of closed set under a continuous function; thus $\mathcal A^0$ is closed.  Any closed and bounded subset of $\mathbb R^d$ is compact. 
    Thus $\mathcal A^0$ is compact, which was to show.
    
    \smallskip
    \textsc{Validity of (A3) and (A4)}~~
    Clearly, $\dom(f_0)=\{(\W,\vtheta): \vtheta\succ\zero\}$ is open and $f_0$ is continuously differentiable on $\dom(f_0)$. Thus it is G\^ateaux differentiable on $\dom(f_0)$.
    Finally, assumption (A4) is trivially fulfilled as Algorithm~\ref{alg:blueprint} employs a simple alternating rule for traversing the blocks of coordinates.
    
    \smallskip
    In summary, Proposition~\ref{prop:tseng} can thus be applied to Problem \ref{prob:primal}, which yields the claim of the corollary.
\end{proof}

\begin{remark}
  In this paper, we experiment on finite-dimensional string kernels,
  so Assumption (B1) is naturally fulfilled. Note that, more
  generally, $\phi(x_i)\in\R^{d_m}$ for all $i=1,\ldots,n$,
  $m=1,\ldots,M$, can be enforced also for infinite-dimensional
  kernels, as, for any finite sample $x_1,\ldots,x_n$, there exists a
  $n$-dimensional feature representation of the sample that can be
  explicitly computed in terms of the empirical kernel map
  \citep{SchMikBurKniMueRaeSmo99}.
\end{remark}

%
%


\section{Applications}\label{experiments}

We demonstrate the performance of different facets of our framework
with several experiments ranging from well-controlled toy data to a
large scale experiment on a highly relevant genomes data set, where we
combine data from a diverse set of organisms using multitask
learning. We start with a review of our prior experimental work based
on algorithms that are closely related to the ones described in this
work.

\subsection{Previous work}

The theoretical framework presented in this paper is a generalization
of the methods successfully used in our previous work.  Special cases
of the above framework were investigated in the context of genomic
signal prediction~\citep{NIPS2008_0510,widmer2010leveraging},
sequence segmentation with structured output learning~\citep{Goe11},
computational
immunology~\citep{widmer2010novel,Widmer2010,Toussaint2010} and
problems from biological
imaging~\citep{Lou2012b,widmer2014graph,xinghua14}.
Further, we have investigated an efficient algorithm to solve special
cases of our method on a large number of machine learning data sets
in~\citet{Widmer2012}.
We have previously summarized some of our earlier work
in~\citep{widmer2012multitask,Widmer2013ai,festschrift}.
%
%
\begin{figure}[h!]
  \centering
  \includegraphics[width=0.70\textwidth]{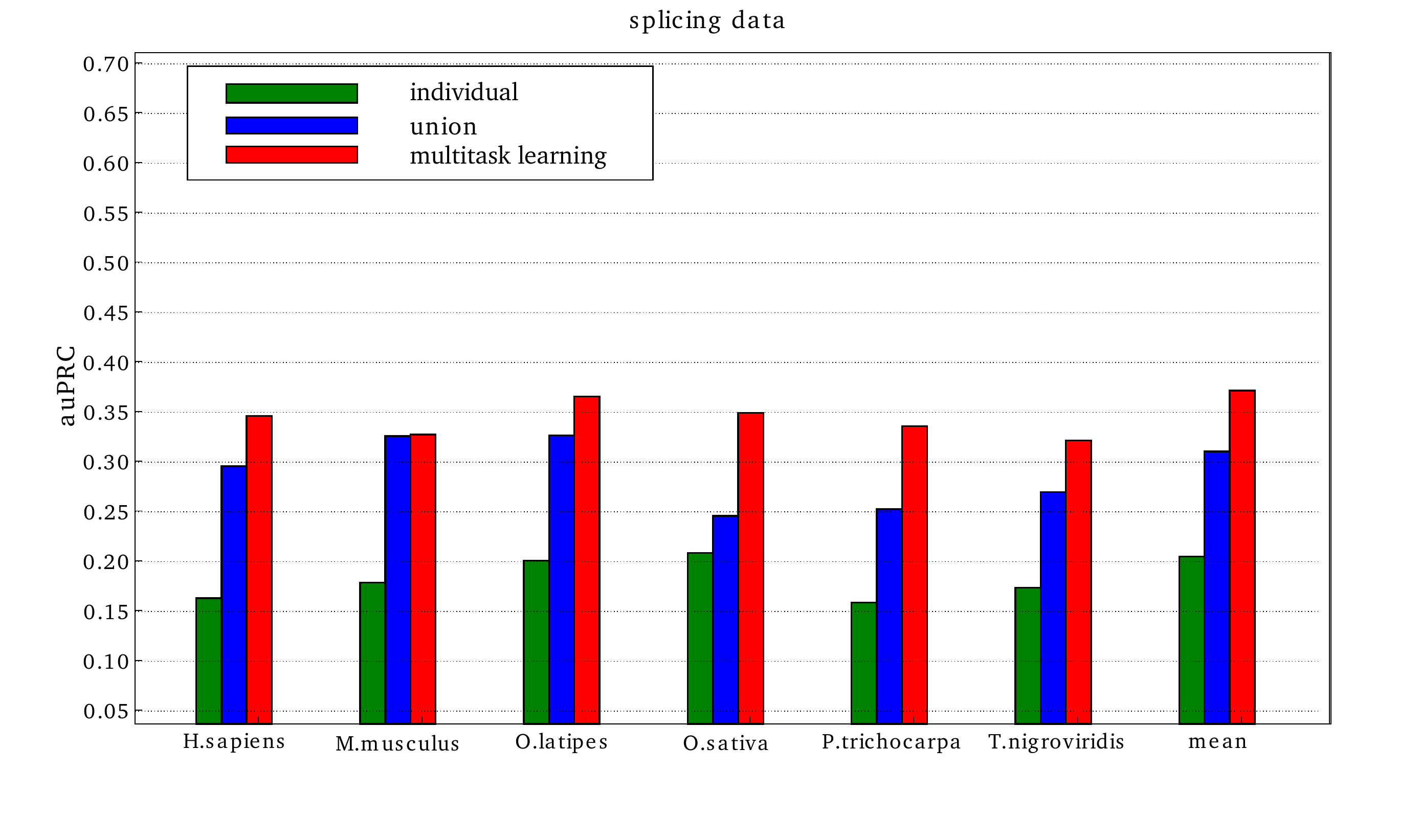}\vspace*{-2ex}
  \caption{Results from multitask learning on several organisms. Shown
    is a subset of the results reported in
    \citet{widmer2010leveraging}, where we combined splice site data
    from $15$ organisms. We compared a multitask learning approach to
    baseline methods \emph{individual} (each task is learned
    independently) and \emph{union} (all data is simply pooled). As
    for multitask learning, we used only a single, fix similarity
    measure, which we inferred from the evolutionary history of the
    organisms at hand. These and other results in
    \citet{NIPS2008_0510,widmer2010leveraging,Goe11,widmer2010novel,Widmer2010,Toussaint2010,Lou2012b,widmer2014graph,xinghua14}
    illustrate the power multitask learning in related tasks in
    computational biology.
   \label{fig:old_results}}
\end{figure}
%
%
An example of earlier results from~\citet{widmer2010leveraging} is
given in Figure~\ref{fig:old_results}. It illustrates an application
of the MTL algorithm to a case where we have multiple datasets
associated with 15 organisms. Their evolutionary relationship is
assumed to be known and is used for informing task relatedness in the
algorithm that is described in Section~\ref{graphreg} and
\citet{widmer2010leveraging}. This experiment exemplifies the
successful application of MTL to applications in computational biology
for the joint-analysis of multiple related problems.

In the two experiments that will be described in the sequel, we will
go beyond our previous work by investigating our framework in its full
generality.

\subsection{Experiments on Biologically Motivated Controlled Data}

In this section, we evaluate Hierarchical MT-MKL as described in
Section~\ref{hmtmkl} on an artificial data set motivated by biological
evolution.
At the core of this example is the binary classification of examples
generated from two $100$-dimensional isotropic Gaussian distributions
with a standard deviation of $\sigma=20$.
The difference of the mean vectors $\mu_{pos}$ and $\mu_{neg}$ is
captured by a difference vector $\mu_d$. We set $\mu_{pos} = 0.5
\mu_d$ and $\mu_{neg} = -0.5 \mu_d$.
To turn this into a MTL setting, we start with a single $\mu_d = (1,
\dots, 1)^T$ and apply mutations to it.
These mutations correspond to flipping the sign of $m$ dimensions in
$\mu_d$, where $m=5$.
Inspired by biological evolution, mutations are then applied in a
hierarchical fashion according to a binary tree of depth $4$
(corresponding to $2^4 = 32$ leaves).
Starting at the root node, we apply subsequent mutations to the
$\mu_d$ at the inner nodes of the hierarchy and work down the tree
until each leaf carries its own $\mu_d$.  We sample $10$ training
points and $1,000$ test points for each class and for each of the $32$
tasks.  The similarity between the $\mu_d$ at the leaves is computed
by taking the dot product between all pairs and is shown in
Figure~\ref{tsm}.  Clearly, this information is valuable when deciding
which tasks (corresponding to leaves in this context) should be
coupled and will be referred to as the \emph{true} task similarity
matrix in the following.
We use Hierarchical MT-MKL as described in Section~\ref{hmtmkl} by
creating adjacency matrices for each inner node and subsequently
learning a weighting using MT-MKL.

%
We compare MT-MKL with $p={1,2,3}$ to the following baseline methods:
\emph{Union} that combines data from all tasks into a single group,
\emph{Individual} that treats each task separately and \emph{Vanilla
  MTL} that uses MTL with the same weight for all matrices.  We report
the mean (averaged over tasks) ROC curve for each of the above methods
in Figure~\ref{auc}.

\begin{figure}[!ht]
  \centering
  \subfigure[True Task similarity matrix (see main text)]{
    \label{tsm}
    \includegraphics[width=0.47\textwidth]{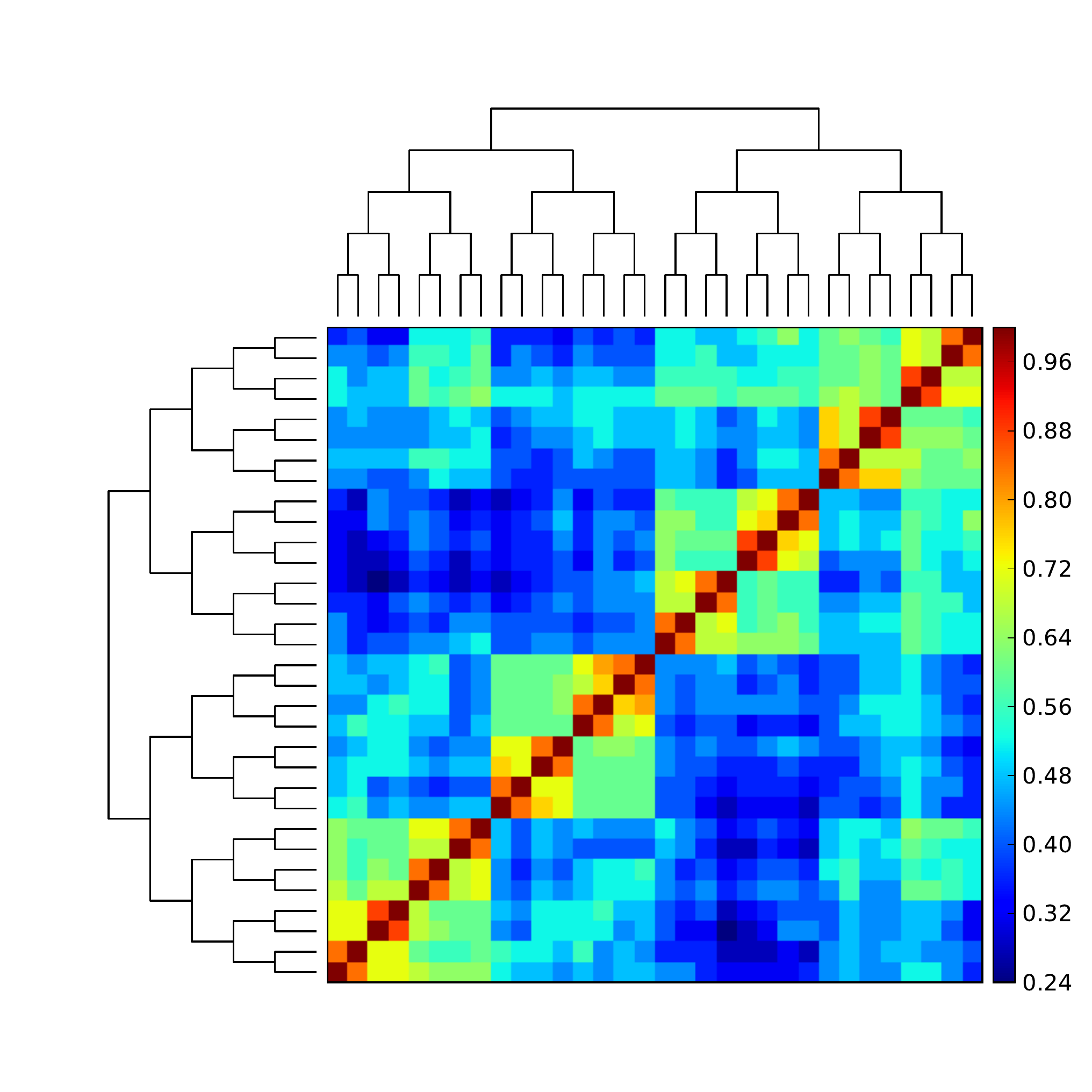}
  }
  \subfigure[Performance of Hierarchical MT-MKL \emph{vs.}\ Baseline methods]{
    \label{auc}
    \includegraphics[width=0.47\textwidth]{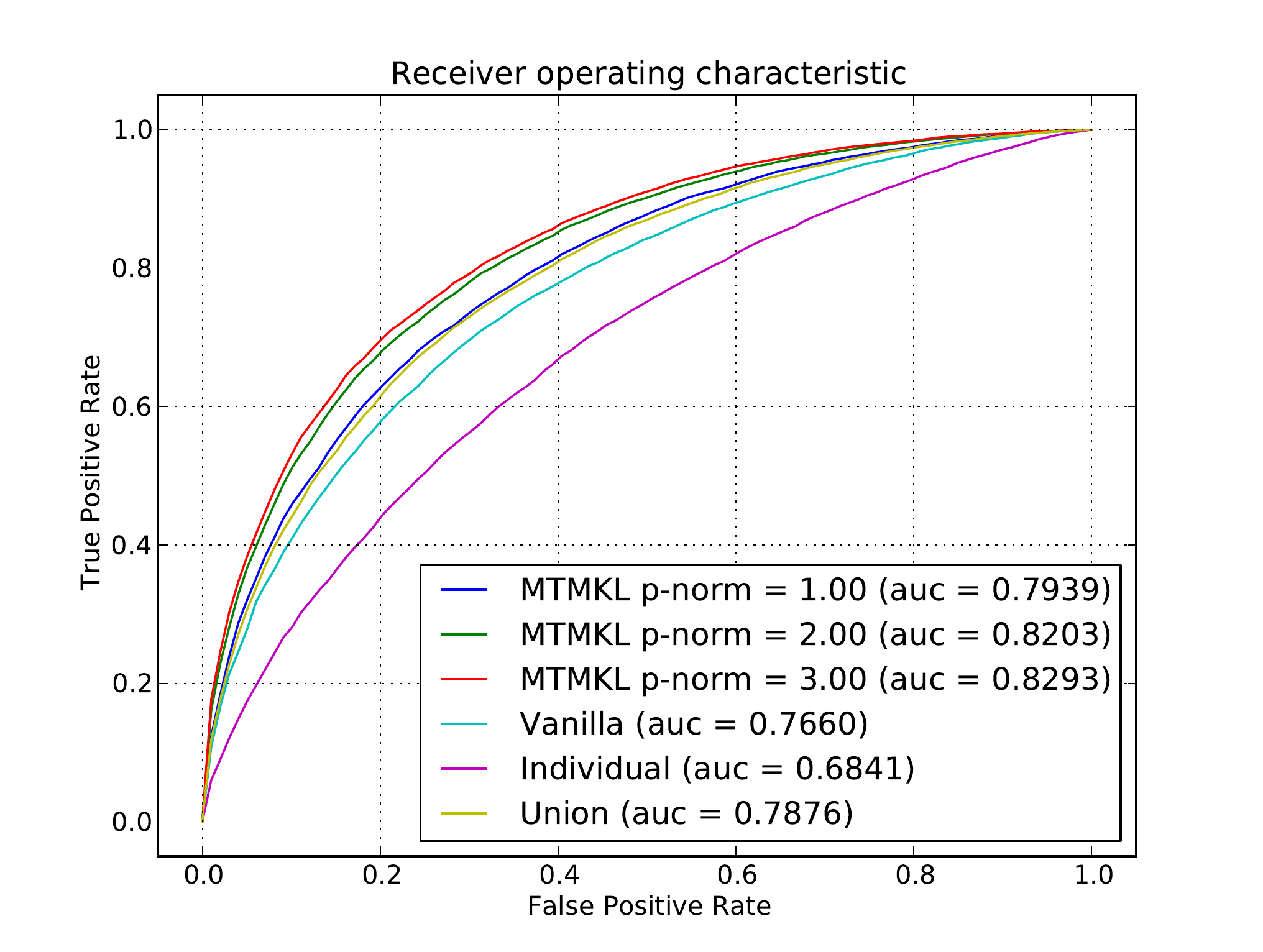}
  }
%
  \caption{Illustration of Hierarchical MT-MKL on an artificial
    dataset: In \ref{tsm}, we show the similarity matrix between all
    $32$ tasks as generated by a biologically inspired scheme, where
    generating parameters are \emph{mutated} according to a given tree
    structure (see main text for details). Comparison of MT-MKL to
    baselines \emph{Vanilla MTL}, \emph{Union}, \emph{Individual} is
    shown in \ref{auc}, where ROC curves are averaged over the $32$
    tasks for each method. MT-MKL with $p=2$ and $p=3$ perform best
    for this task.
   \label{fig:results}}
\end{figure}
From Figure~\ref{auc} we observe that the baseline \emph{Individual}
performs worst by a large margin, suggesting that combining
information from several tasks is clearly beneficial for this data
set.
Next, we observe that a simple way of combining tasks
(i.e., \emph{Union}) already considerably improves
performance. Furthermore, we observe that learning weights of
hierarchically inferred task grouping in fact improves performance
compared to \emph{Vanilla} for non-sparse MT-MKL (i.e., $p=2,3$).
Of all methods, non-sparse MT-MKL is most accurate for all recall values. 

\subsection{Genomic Signal - Transcription Start Site (TSS) Prediction}

In this experiment, we consider an application from genome sequence
analysis.  The goal is to accurately identify the genomic signal
called transcription start site (TSS) based on the surrounding genomic
sequence. TSS is the genomic location where transcription, the process
whereby the RNA copies are made from regions of the genome, is
initiated at the genome sequences.
We have obtained genomic data from ENSEMBL~\citep{Hubbard2002ensembl},
a community resource that brings together genomic sequences and their
annotations.  From this, we compiled a data set for nine organisms
(\textit{E.\ caballus}, \textit{C.\ briggsae}, \textit{M.\ musculus},
\textit{C.\ elegans}, \textit{D.\ rerio}, \textit{D.\ simulans},
\textit{V.\ vinifera}, \textit{A.\ thaliana}, and
\textit{H.\ sapiens}), where we took annotated instances of
transcription starts as positive examples and sequences around
randomly selected positions in the genome as background.  We use our
framework to jointly learn models for different organisms, treating
different organisms as different tasks.

\paragraph*{Task similarity}

To generate an initial task similarity matrix, we extracted the
phylogenetic similarity between different organisms based on their
genomic sequences.
%
%
In particular, we computed the Hamming distance between well-conserved
16S ribosomal RNA regions (i.e., stretches of genomic sequence with
low degree of change during evolution) between different classes of
organisms~\citep{Isenbarger2008rRNA}.
Subsequently, we either used this similarity directly in our multitask
learning algorithms (MTL) or attempted to refine it further using
MT-MKL.
%
%
%
To create a set of task similarities to be weighted by MT-MKL, we
applied exponential transformations to the base task similarity at
different length-scales ($\sigma=\{0.1, 7.55, 15.0\}$; see
  Section~\ref{smooth_MTMKL}).
%
%
%

\paragraph*{Experimental Setup and Results}

We have collected 4,000 TSS signal sequences for each organism, which
includes 1,000 positive and 3,000 negative label sequences for
training and testing. Both ends of the TSS signal label sequence
consist of $1,200$ flanking nucleotides.  On this data set, we
evaluated the two baseline methods, MTL and MT-MKL.
In the used evaluation scheme, we split the data in training set,
validation set and testing set for each organism. We use ten splits.
The best regularization constant is selected on the validation split
for each organism. 
In Figure~\ref{fig:tssresults} we report the average area under the
ROC curve (AUC) over the ten test sets, for each of which the best
regularization parameter was chosen on a separate evaluation set.

\begin{figure}[h]
\centering 
    \includegraphics[width=0.77\textwidth]{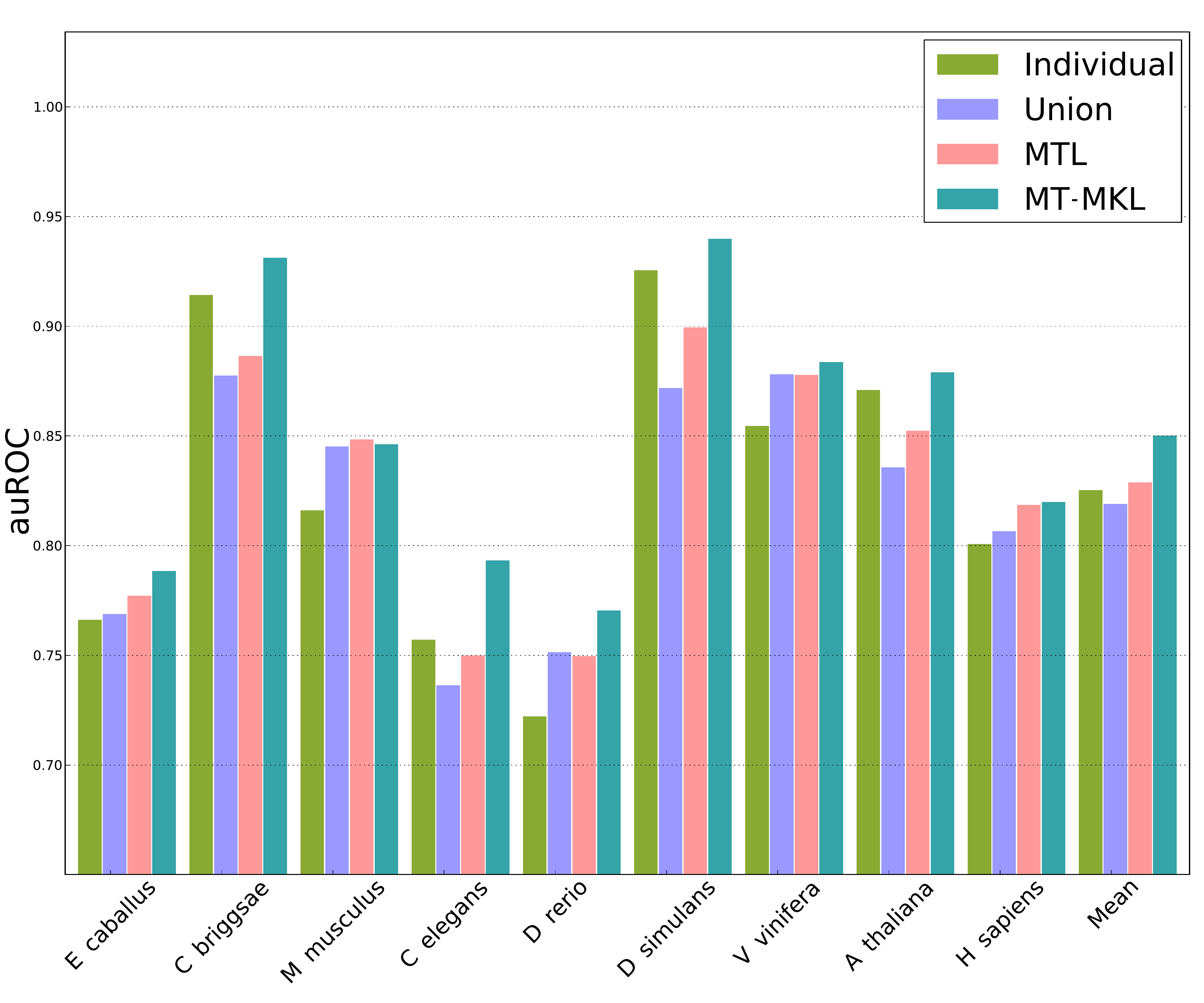}
  \caption{Average AUC achieved by the proposed MT-MKL as well as the
    baseline methods, on the gene-start dataset (TSS). MT-MKL improves
    the mean accuracy considerably. In addition, the accuracy of
    MT-MKL is best in eight out of the nine
    organisms. \label{fig:tssresults}}
\end{figure}

From Figure~\ref{fig:tssresults}, we observe that four out of nine
organisms the single-task SVM (individual) outperforms the SVM that is
trained on training instances from all organisms pooled (union).  From
which we conclude that the learning tasks are substantially
dissimilar. On the other hand, we observe that for some organisms
(\textit{M.\ musculus}, \textit{D.\ rerio}, \textit{V.\ vinifera}, and
\textit{H.\ sapiens}), there is an improvement by union over
individual, which indicates that these tasks are more similar than the
remaining tasks.  This is an indicator that MTL may be beneficial for
this data. See also discussion in \citet{festschrift}. Indeed, MTL
improves (at least marginally) over \emph{Union} and \emph{Individual}
in seven and five out of nine organisms, respectively.  But it is
surpassed by \emph{Individual} for three organisms
(\textit{A.\ thaliana}, \textit{C.\ briggsae}, \texttt{C.\ elegans},
and \textit{D.\ simulans}). While the overall performance of of MTL is
slightly better than \emph{Union} and \emph{Individual}, the
differences are minor which we attribute a possibly suboptimally
chosen task similarity matrix. (In fact, practically speaking, we find
that selecting a good task similarity matrix is the most difficult
aspect of Multitask learning.)

%
%
The proposed MT-MKL on the other hand, improves over individual on
eight out of nine organisms (and is not much worse on the nineth
task).  It improves over MTL by close to $5\%$ AUC for some
organisms. On average, it performs about $2.5\%$ better than any other
considered algorithm.  MT-MKL achieves this by refining task
similarities and thus is able to improve classification performance.

In summary, we are able to demonstrate that multitask learning and
MT-MKL strategies are beneficial when combining information from
several organisms and we believe that this setting has potential for
tackling future prediction problems in computational biology, and
potentially also to other application domains of multitask and
multiple kernel learning such as computer
vision~\citep{Lou2012b,kloft2009feature,binder2012insights,widmer2014graph,xinghua14}
and computer
security~\citep{kloft2008automatic,kloft2012security,gornitz2013toward}.

\section{Conclusion}\label{conclusions}

We presented a general regularization-based framework for Multi-task
learning (MTL), in which the similarity between tasks can be learned
or refined using $\ell_p$-norm Multiple Kernel learning (MKL).
Based on this very general formulation (including a general loss
function), we derived the corresponding dual formulation using Fenchel
duality applied to Hermitian matrices.
We showed that numerous established MTL methods can be derived as
special cases from both, the primal and dual of our formulation.
Furthermore, we derived an efficient dual-coordinate descend
optimization strategy for the hinge-loss variant of our formulation
and provide convergence bounds for our algorithm. Combined with our
efficient integration into the SHOGUN toolbox using the COFFIN feature
hashing framework, the approach could be used to process a large
number of training points.  The solver can also be used to solve the
vanilla $\ell_p$-norm MKL problem in the primal very efficiently, and
potentially extended to more recent MKL approaches
\citep{cortes2013learning}.
Our solvers including all discussed special cases are made available
as open-source software as part of the SHOGUN machine learning
toolbox.

In the experimental part of this paper, we analyzed our algorithm in
terms of predictive performance and ability to reconstruct task
relationships on toy data, as well as on problems from computational
biology.  This includes a study at the intersection of multitask
learning and genomics, where we analyzed $9$ organisms jointly.  In
summary, we were able to demonstrate that the proposed learning
algorithm can outperform baseline methods by combining information
from several organisms.

In the future we would investigate the theoretical foundations of the
approach (a good starting point to this end is the work by
\cite{kloft2011local,kloft2012convergence}), extensions to structured
output prediction \citep{Goe11}, and to apply the method to further
problems from computational biology and the biomedical domain.  These
settings have great potential; for instance, a Bayesian adaption of
our approach was very recently shown to be the leading model in an
international comparison of $44$ drug prediction methods for breast
cancer \citep{costello2014community}.

\paragraph{Acknowledgements} We thank thank Bernhard Sch\"olkopf, Gabriele Schweikert,
  Alexander Zien and S\"oren Sonnenburg for early contributions and
  helpful discussions and Klaus-Robert M\"uller and Mehryar Mohri for
  helpful discussions.  This work was supported by the German Research
  Foundation (DFG) under MU 987/6-1 and RA 1894/1-1 as well as by the
  European Community's 7th Framework Programme under the PASCAL2
  Network of Excellence (ICT-216886). Marius Kloft acknowledges
  support by the German Research Foundation through the grants KL
  2698/1-1 and 2698/2-1. Gunnar R\"atsch acknowledges additional
  support from the Sloan Kettering Institute.

\appendix

\section{Fenchel Duality in Hilbert Spaces}\label{app:fenchel}

In this section, we review Fenchel duality theory for convex functions
over real Hilbert spaces.  The results presented in this appendix are
taken from Chapters 15 and 19 in \cite{BauCom11}. For complementary
reading, we refer to the excellent introduction of
\cite{BauLuc12}. Fenchel duality for machine learning has also been
discussed in \cite{Rifkin2007} assuming Euclidean spaces.  We start
the presentation with the definition of the convex conjugate function.

\begin{definition}[Convex conjugate]
  Let $\Hilb$ be a real Hilbert space and let
  $g:\Hilb\rightarrow\R\cup\{\infty\}$ be a convex function.  We
  assume in the whole section that $g$ is proper, that is,
  $\{\w\in\Hilb\thinspace|\thinspace g(\w)\in\R\}\neq\emptyset$.  Then
  the convex conjugate $g^*:\Hilb\rightarrow\R\cup\{\infty\}$ is
  defined by $g^*(\w)=\sup_{\v\in\Hilb} \langle\v,\w\rangle - g(\v)$.
\end{definition}

\noindent\smallskip
As the convex conjugate is a supremum over affine functions, it is convex and lower semi-continuous. We have the beautiful duality
$$ g=g^{**} \thinspace\Leftrightarrow\thinspace \begin{cases} g\text{
    is convex and} \\ \text{lower
    semi-continuous}\thinspace. \end{cases} $$ This indicates that the
``right domain'' to study conjugate functions is the set of convex,
lower semi-continuous, and proper (``ccp'') functions.  In order to
present the main result of this appendix, we need the following
standard result from operator theory.

\begin{proposition}[Definition and uniqueness of the adjoint map]
  Let $\Hilb$ be a real Hilbert space and let
  $A:\Hilb\rightarrow\widetilde{\Hilb}$ be a continuous linear map.
  Then there exists a unique continuous linear map
  $A^*:\widetilde{\Hilb}\rightarrow\Hilb$ with $\langle
  A(\w),\valpha\rangle=\langle\w,A^*\valpha\rangle$, which is called
  \text{\rm adjoint map} of $A$.
\end{proposition}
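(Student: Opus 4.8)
The plan is to construct $A^*$ pointwise via the Riesz representation theorem, then verify linearity and continuity of the resulting assignment, and finally read off uniqueness from the defining relation. Throughout I use that for linear maps between normed spaces continuity is equivalent to boundedness, so it suffices to exhibit operator-norm bounds.

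First I would fix an arbitrary $\valpha\in\widetilde{\Hilb}$ and consider the functional $\phi_{\valpha}:\Hilb\rightarrow\R$, $\phi_{\valpha}(\w):=\langle A(\w),\valpha\rangle$. Linearity of $A$ and of the inner product in its first slot make $\phi_{\valpha}$ linear, and Cauchy--Schwarz together with boundedness of $A$ gives $|\phi_{\valpha}(\w)|\leq\|A(\w)\|\,\|\valpha\|\leq\|A\|\,\|\valpha\|\,\|\w\|$, so $\phi_{\valpha}$ is a bounded linear functional on $\Hilb$. The Riesz representation theorem then produces a \emph{unique} vector, which I denote $A^*(\valpha)\in\Hilb$, such that $\phi_{\valpha}(\w)=\langle\w,A^*(\valpha)\rangle$ for every $\w\in\Hilb$; this is precisely the claimed identity $\langle A(\w),\valpha\rangle=\langle\w,A^*(\valpha)\rangle$.

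Next I would check that the assignment $\valpha\mapsto A^*(\valpha)$ is linear and continuous. Linearity follows from the uniqueness clause of Riesz and the linearity of the inner product in its second slot (we are over a real space): for any $\valpha,\valpha'$ and scalar $\beta$, both $A^*(\valpha+\beta\valpha')$ and $A^*(\valpha)+\beta A^*(\valpha')$ represent the same functional $\w\mapsto\langle A(\w),\valpha+\beta\valpha'\rangle$, hence they coincide. For continuity I would substitute $\w=A^*(\valpha)$ into the defining identity, obtaining $\|A^*(\valpha)\|^2=\langle A(A^*(\valpha)),\valpha\rangle\leq\|A\|\,\|A^*(\valpha)\|\,\|\valpha\|$; dividing by $\|A^*(\valpha)\|$ (the case $A^*(\valpha)=0$ being trivial) yields $\|A^*(\valpha)\|\leq\|A\|\,\|\valpha\|$, so $A^*$ is bounded with $\|A^*\|\leq\|A\|$.

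Finally, uniqueness of the map itself is immediate: if $B:\widetilde{\Hilb}\rightarrow\Hilb$ also satisfies $\langle A(\w),\valpha\rangle=\langle\w,B(\valpha)\rangle$ for all $\w,\valpha$, then $\langle\w,A^*(\valpha)-B(\valpha)\rangle=0$ for every $\w\in\Hilb$, and taking $\w=A^*(\valpha)-B(\valpha)$ forces $A^*(\valpha)=B(\valpha)$. The only genuinely substantive step is the invocation of the Riesz representation theorem, which supplies the representing vector; the remaining arguments are routine bookkeeping. The point worth flagging is that Riesz requires \emph{completeness} of $\Hilb$, so the hypothesis that $\Hilb$ is a Hilbert space (rather than a mere inner-product space) is exactly what makes the construction go through.
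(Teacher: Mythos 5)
Your proof is correct and complete: the construction of $A^*$ via the Riesz representation theorem, the linearity/boundedness checks (including the bound $\norm{A^*}\leq\norm{A}$ obtained by substituting $\w=A^*(\valpha)$), and the orthogonality argument for uniqueness are all sound, and you rightly flag completeness of $\Hilb$ as the hypothesis that makes Riesz applicable. Note that the paper itself gives no proof of this proposition --- it is quoted as a standard result from Chapters 15 and 19 of the cited reference (Bauschke and Combettes) --- and your argument is precisely the canonical one found there, so there is nothing to compare beyond saying you have supplied the standard proof the paper omits.
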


\smallskip\noindent For example, in the Euclidean case, we have
$\Hilb=\R^m$, $\widetilde{\Hilb}=\R^n$, and $A\in\mathbb\R^{m\times
  n}$ so that simply the transpose $A^*=A^\top\in\mathbb\R^{n\times
  m}$.  We now present the main result of this appendix, which is
known as \emph{Fenchel's duality theorem}:

\begin{theorem}[Fenchel's duality theorem]\label{thm:fenchel}
  Let $\Hilb,\widetilde{\Hilb}$ be real Hilbert spaces and let
  $g:\Hilb\rightarrow\R\cup\set{\infty}$ and
  $h:\widetilde{\Hilb}\rightarrow\R\cup\set{\infty}$ be ccp.  Let
  $A:\Hilb\rightarrow\widetilde{\Hilb}$ be a continuous linear map.
  Then the primal and dual problems, 
  \begin{align*}
    & p^* = \inf_{\w\in\Hilb} ~ g(\w) + h(A(\w)) \\
    & d^* = \thinspace\sup_{\valpha\in\widetilde{\Hilb}} \thinspace -g^*(A^*(\valpha)) - h^*(-\valpha) \thinspace,
  \end{align*}
  satisfy \text{ \rm weak duality} (i.e., $d^*\leq p^*$). Assume,
  furthermore, that
  $A(\textrm{dom}(g))\cap\textrm{cont}(h)\neq\emptyset$, where
  $\text{\rm dom}(f):=\{\w\in\Hilb:g(\w)<\infty\}$ and $\text{\rm
    cont(h)}:=\{\valpha\in\widetilde{\Hilb}:
  h~\textrm{continuous~in}~\valpha\}$.  Then we even have \text{\rm
    strong duality} (i.e., $d^*=p^*$) and any optimal solution
  $(\w^\star,\valpha^\star)$ satisfies
  $$ \w^\star=\nabla g^*(A^*(\valpha^\star)) \thinspace, $$
  if $g^*\circ A^*$ is (G\^ateaux) differentiable in $\valpha^\star$.
\end{theorem}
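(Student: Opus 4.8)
The plan is to reduce Fenchel's duality theorem to the Fenchel--Moreau biconjugation theorem applied to a perturbation function, following the development in \citet{BauCom11}. Weak duality comes essentially for free from the Fenchel--Young inequality: for every $\w\in\Hilb$ and $\valpha\in\widetilde{\Hilb}$ the definition of the conjugate gives $g(\w)\geq\inner{\w,A^*\valpha}-g^*(A^*\valpha)$ and $h(A\w)\geq\inner{A\w,-\valpha}-h^*(-\valpha)$. Adding these and using $\inner{\w,A^*\valpha}=\inner{A\w,\valpha}$ (the defining property of the adjoint) cancels the cross terms, so $g(\w)+h(A\w)\geq -g^*(A^*\valpha)-h^*(-\valpha)$; taking the infimum over $\w$ on the left and the supremum over $\valpha$ on the right yields $p^*\geq d^*$.

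For strong duality I would introduce the value function $\phi(\boldsymbol{u}):=\inf_{\w\in\Hilb}g(\w)+h(A\w+\boldsymbol{u})$, so that $\phi(0)=p^*$. Since $(\w,\boldsymbol{u})\mapsto g(\w)+h(A\w+\boldsymbol{u})$ is jointly convex, $\phi$ is convex, and substituting $\v=A\w+\boldsymbol{u}$ and splitting the supremum gives $\phi^*(\valpha)=g^*(-A^*\valpha)+h^*(\valpha)$; hence, after the substitution $\valpha\mapsto-\valpha$, one finds $\phi^{**}(0)=\sup_{\valpha}\,-\phi^*(\valpha)=d^*$. It therefore suffices to show $\phi(0)=\phi^{**}(0)$, and this is exactly where the constraint qualification enters. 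Since $\dom(\phi)=\dom(h)-A(\dom(g))$, choosing $\w_0\in\dom(g)$ with $A\w_0\in\operatorname{cont}(h)$ (which exists by hypothesis) gives $\phi(\boldsymbol{u})\leq g(\w_0)+h(A\w_0+\boldsymbol{u})$, and as $h$ is finite and bounded above near $A\w_0$ this bounds $\phi$ above on a neighborhood of $0$. A proper convex function that is bounded above on a neighborhood of a point is continuous there, so $\phi$ is continuous --- in particular lower semicontinuous and subdifferentiable --- at $0$, and Fenchel--Moreau then gives $\phi(0)=\phi^{**}(0)$, i.e.\ $p^*=d^*$.

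For the optimality relation I would run the weak-duality chain backwards at an optimal pair $(\w^\star,\valpha^\star)$: strong duality forces both Fenchel--Young inequalities to hold with equality, so $g(\w^\star)+g^*(A^*\valpha^\star)=\inner{\w^\star,A^*\valpha^\star}$, which is equivalent to $A^*\valpha^\star\in\partial g(\w^\star)$ and hence to $\w^\star\in\partial g^*(A^*\valpha^\star)$. When $g^*\circ A^*$ is G\^ateaux differentiable at $\valpha^\star$ this subdifferential collapses to the single gradient, giving $\w^\star=\nabla g^*(A^*\valpha^\star)$. I expect the genuinely delicate step to be the strong-duality part, namely establishing continuity (equivalently subdifferentiability) of the value function $\phi$ at the origin: in an infinite-dimensional Hilbert space this rests on the Slater-type interior-point condition placing $0$ in the interior of $\dom(\phi)$ together with the automatic local continuity of convex functions that are locally bounded above. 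The weak-duality and optimality parts are comparatively routine, so in the write-up I would invoke the relevant lemmas from Chapters~15 and~19 of \citet{BauCom11} rather than reprove them.
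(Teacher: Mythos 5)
The paper never proves this theorem: Appendix~A explicitly imports it, together with the surrounding definitions and computation rules, from Chapters 15 and 19 of \cite{BauCom11}, so there is no internal proof to compare against. Your proposal is a correct, essentially self-contained proof along the classical perturbation-function route: weak duality from Fenchel--Young plus the adjoint identity; strong duality by showing the value function $\phi(\boldsymbol{u})=\inf_{\w\in\Hilb}\,g(\w)+h(A\w+\boldsymbol{u})$ is convex, has conjugate $\phi^*(\valpha)=g^*(-A^*\valpha)+h^*(\valpha)$ (so that $\phi^{**}(0)=d^*$), and is continuous, hence subdifferentiable, at $0$ under the constraint qualification; and the optimality relation by forcing the two Fenchel--Young inequalities to be tight and using $\partial g^*=(\partial g)^{-1}$ for ccp $g$. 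This is genuinely different from the route taken in \cite{BauCom11}, where Fenchel--Rockafellar duality is obtained from the Attouch--Br\'ezis theorem and exactness of infimal convolutions; your argument is the more elementary Rockafellar-style one, at the price of invoking the standard lemma that a convex function bounded above on a neighborhood of a point is continuous there, whereas the textbook route yields sharper qualification conditions (cone/closedness conditions weaker than an interior-point condition).

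Three small points to tighten. First, \emph{Fenchel--Moreau} is not quite the right lemma for $\phi(0)=\phi^{**}(0)$, since it requires lower semicontinuity of $\phi$ everywhere, which the value function need not have; what you actually need is the pointwise fact that $\partial\phi(0)\neq\emptyset$ implies $\phi(0)=\phi^{**}(0)$ (take $\valpha_0\in\partial\phi(0)$; then $\phi^*(\valpha_0)=-\phi(0)$, so $\phi^{**}(0)\geq-\phi^*(\valpha_0)=\phi(0)\geq\phi^{**}(0)$). You correctly flagged subdifferentiability as the operative property, so this is a citation issue, not a gap---and it hands you dual attainment for free. Second, the continuity argument presupposes $\phi(0)$ finite; if $\phi(0)=-\infty$ then $p^*=-\infty$ and weak duality already forces $d^*=p^*$, so state that degenerate case separately. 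Third, the theorem's hypothesis is differentiability of the composition $g^*\circ A^*$ at $\valpha^\star$, while your argument (and the stated conclusion) really uses differentiability of $g^*$ at the point $A^*\valpha^\star$, which collapses $\partial g^*(A^*\valpha^\star)$ to a singleton; if only the composition is differentiable and $A$ has nontrivial kernel, the formula $\w^\star=\nabla g^*(A^*(\valpha^\star))$ is not even unambiguously defined. That looseness sits in the paper's statement itself (and is harmless where the theorem is used in Section~2.3, where $g^*$ is a differentiable quadratic), so your reading is the intended one.
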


\smallskip
\noindent When applying Fenchel duality theory, we frequently need to
compute the convex conjugates of certain functions. To this end, the
following computation rules are helpful.

\begin{proposition}\label{eq:comp_rules}
  The following computation rules hold for the convex conjugate:
  \begin{enumerate}
  \item Let $g:\Hilb\rightarrow\R\cup\set{\infty}$ be a proper convex
    function on a real Hilbert space $\Hilb$.  Then, for any
    $\lambda>0$ and $\w\in\Hilb$, we have $(\lambda g)^*(\w)=\lambda
    h^*(\w/\lambda)$.
  \item Furthermore, assume that $\Hilb=\Hilb_1\bigoplus\Hilb_2$ and
    $g(\w)=g_1(\w_1)+g_2(\w_2)$, where
    $g_1:\Hilb_1\rightarrow\R\cup\set{\infty}$ and
    $g_2:\Hilb_2\rightarrow\R\cup\set{\infty}$, are proper convex
    functions on Hilbert spaces $\Hilb_1$ and $\Hilb_2$, respectively.
    Then, for any $\w=(\w_1,\w_2)\in\Hilb_1\bigoplus\Hilb_2$, we have
    $g^*(\w)=g_1^*(\w_1)+g^*_2(\w_2)$.
  \end{enumerate}
\end{proposition}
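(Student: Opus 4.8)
The plan is to obtain both identities directly from the definition of the convex conjugate, $g^*(\w)=\sup_{\v\in\Hilb}\inner{\v,\w}-g(\v)$, by exploiting the algebraic form of the expression inside the supremum in each case. Both claims reduce to elementary manipulations of suprema, so no deep difficulty is expected; the only point requiring care is that the functions involved are extended-real-valued (they may take the value $+\infty$).

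For the first rule I would start from $(\lambda g)^*(\w)=\sup_{\v\in\Hilb}\inner{\v,\w}-\lambda g(\v)$ and factor the positive constant $\lambda$ out of the objective, rewriting the integrand as $\lambda\bracket{\inner{\v,\w/\lambda}-g(\v)}$. Since $\lambda>0$ is a fixed scalar it commutes with the supremum, which gives $(\lambda g)^*(\w)=\lambda\sup_{\v}\bracket{\inner{\v,\w/\lambda}-g(\v)}=\lambda g^*(\w/\lambda)$. (Here the $h^*$ appearing in the statement should read $g^*$.)

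For the second rule I would use that the inner product on the direct sum splits as $\inner{\v,\w}=\inner{\v_1,\w_1}+\inner{\v_2,\w_2}$ and that, by hypothesis, $g(\v)=g_1(\v_1)+g_2(\v_2)$. Substituting into the definition turns $g^*(\w)$ into a supremum over $\v=(\v_1,\v_2)\in\Hilb_1\oplus\Hilb_2$ of an objective that is additively separable into a term depending only on $\v_1$ and a term depending only on $\v_2$. The supremum of such a separable objective over the product space equals the sum of the two partial suprema, which are precisely $g_1^*(\w_1)$ and $g_2^*(\w_2)$.

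The main (and essentially only) delicate point is justifying the split of a joint supremum into the sum of two independent suprema when the summands may be infinite. Here properness of $g_1$ and $g_2$ is exactly what is needed: evaluating at any point where $g_i$ is finite shows that each partial supremum $\sup_{\v_i}\inner{\v_i,\w_i}-g_i(\v_i)$ lies in $(-\infty,+\infty]$, so their sum is well-defined and no indeterminate form $\infty-\infty$ can arise. With this observation the separable-supremum identity applies verbatim and completes the argument.
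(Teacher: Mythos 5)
Your proof is correct, but note that the paper itself contains no proof of this proposition: it is stated as a standard fact, imported from Chapters 15 and 19 of \cite{BauCom11}, as announced at the head of Appendix~\ref{app:fenchel}. Your self-contained verification is therefore a genuine alternative to the paper's citation rather than a parallel of an existing argument. Both of your steps are sound. For rule~1, factoring the positive scalar out of the supremum via $\inner{\v,\w}-\lambda g(\v)=\lambda\bracket{\inner{\v,\w/\lambda}-g(\v)}$ is exactly right (and you are also right that the $h^*$ in the statement is a typo for $g^*$). For rule~2, the splitting $\inner{\v,\w}=\inner{\v_1,\w_1}+\inner{\v_2,\w_2}$ on $\Hilb_1\bigoplus\Hilb_2$ reduces the conjugate to the supremum of an additively separable objective, and your handling of the one delicate point is the correct one: properness of $g_1,g_2$ means each $f_i(\v_i):=\inner{\v_i,\w_i}-g_i(\v_i)$ takes values in $\R\cup\set{-\infty}$ only and is finite at some point, so each partial supremum $g_i^*(\w_i)$ lies in $(-\infty,+\infty]$, the sum $g_1^*(\w_1)+g_2^*(\w_2)$ is unambiguous, and the identity $\sup_{(\v_1,\v_2)}\bracket{f_1(\v_1)+f_2(\v_2)}=\sup_{\v_1}f_1(\v_1)+\sup_{\v_2}f_2(\v_2)$ holds with no $\infty-\infty$ indeterminacy. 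What the paper's citation buys is brevity and access to the broader conjugate calculus of \cite{BauCom11}; what your argument buys is that the two rules actually invoked in Section~\ref{sec:dualization} are verified from first principles, independently of any external source.
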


\section{Conjugate of the Logistic Loss}\label{app:loss}    

The following lemma gives the convex conjugate of the logistic loss.

\begin{lemma}[Conjugate of Logistic Loss]\label{lemma:logconj}
  The conjugate of the logistic loss, defined as
  $l(a)=\log(1+\exp(-a)$, is given by
    $$l^*(a)=-t\log(-a)+(1+a)\log(1+a) \thinspace. $$
\end{lemma}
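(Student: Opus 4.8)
The plan is to evaluate the conjugate directly from its definition, $l^*(a) = \sup_{v\in\R}\big(av - \log(1+e^{-v})\big)$, and to exploit that the objective $\phi(v) := av - \log(1+e^{-v})$ is concave in $v$ (the logistic loss is convex, so its negative is concave, and $av$ is affine). Consequently every stationary point of $\phi$ is a global maximizer, and the supremum is either attained at such a point or escapes to $+\infty$; this reduces the computation to a single first-order condition together with a boundary analysis that pins down the effective domain $[-1,0]$. I would also note at the outset that the displayed formula should read $-a\log(-a)+(1+a)\log(1+a)$ (the ``$t$'' is a typo for ``$a$''), valid for $-1\le a\le 0$ and equal to $+\infty$ otherwise, exactly as recorded in Table~\ref{tab:Fenchel}.

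First I would compute $\phi'(v) = a + \dfrac{e^{-v}}{1+e^{-v}} = a + \dfrac{1}{1+e^{v}}$. Since $\dfrac{1}{1+e^{v}}$ sweeps the open interval $(0,1)$ as $v$ ranges over $\R$, the stationarity equation $\phi'(v)=0$ has a (unique) solution precisely when $-a\in(0,1)$, i.e.\ $a\in(-1,0)$. Solving $\dfrac{1}{1+e^{v^\star}} = -a$ gives $e^{v^\star} = -\dfrac{1+a}{a}$, equivalently $e^{-v^\star} = \dfrac{-a}{1+a}$, so that $v^\star = \log(1+a) - \log(-a)$.

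Back-substitution is then routine bookkeeping. Using $1+e^{-v^\star} = 1 + \dfrac{-a}{1+a} = \dfrac{1}{1+a}$, hence $\log(1+e^{-v^\star}) = -\log(1+a)$, together with $av^\star = a\log(1+a) - a\log(-a)$, I obtain
\[
  l^*(a) = av^\star - \log(1+e^{-v^\star}) = a\log(1+a) - a\log(-a) + \log(1+a) = -a\log(-a) + (1+a)\log(1+a),
\]
which is the claimed expression on the interior of the domain.

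The remaining work is the boundary and domain analysis, which I expect to be the only mildly delicate step rather than a genuine obstacle. For $a>0$ (resp.\ $a<-1$) one checks that $\phi'$ keeps a constant sign, and a leading-order expansion of $\phi(v)$ as $v\to+\infty$ (resp.\ $v\to-\infty$) shows $\phi$ is unbounded above, giving $l^*(a)=+\infty$; in particular for $a<-1$ the identity $\log(1+e^{-v}) = -v + \log(1+e^{v})$ yields $\phi(v)\to+\infty$ through its leading term $(a+1)v$. At the two endpoints $a\in\{-1,0\}$ there is no finite stationary point, so the supremum is attained only in the limit, but the same asymptotics show that limit equals $0$, which matches the formula once one adopts the standard convention $0\log 0 = 0$. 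Collecting the interior computation with these cases yields the stated conjugate.
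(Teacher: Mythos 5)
Your proof is correct and follows essentially the same route as the paper's: both evaluate the conjugate directly from its definition, solve the first-order stationarity condition (arriving at the same maximizer, with $1+e^{-v^\star}=1/(1+a)$), and resubstitute to obtain $-a\log(-a)+(1+a)\log(1+a)$, declaring the problem unbounded for $a<-1$ and $a>0$. Your version is in fact slightly more thorough than the paper's --- which merely asserts unboundedness and treats only the open interval $]-1,0[$, omitting the endpoints --- since you justify the asymptotics, handle $a\in\{-1,0\}$ via the $0\log 0=0$ convention, and correctly flag that the ``$t$'' in the stated formula is a typo for ``$a$''.
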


\begin{proof}
  By definition of the conjugate,
  $$ l^*(a) ~=~ \sup_{b\in\Real} ~ \underbrace{ab \thinspace-\thinspace \log(1+\exp(-b))}_{=:\psi(b)} \thinspace. $$
Note that the problem is unbounded for $a<-1$ and $a>0$. For $a\in]-1,0[$, the supremum is attained when $\psi'(b)=0$, which translates into $b=-\log(-a/(1+a))$ and $1+\exp(-b)=1/(1+a)$. Thus
$$ l^*(a) \thinspace=\thinspace -a\log(-a/(1+a)) -\log(1/(1+a)) \thinspace=\thinspace -a\log(-a) + (1+a)\log(1+a) \thinspace, $$
which was to show
\end{proof}

\bibliography{multitask}

\begin{thebibliography}{63}
\providecommand{\natexlab}[1]{#1}
\providecommand{\url}[1]{\texttt{#1}}
\expandafter\ifx\csname urlstyle\endcsname\relax
  \providecommand{\doi}[1]{doi: #1}\else
  \providecommand{\doi}{doi: \begingroup \urlstyle{rm}\Url}\fi
\itemsep5pt
\bibitem[Agarwal et~al.(2010)Agarwal, {Daum\'{e} III}, and
  Gerber]{agarwal2010learning}
A.~Agarwal, H.~{Daum\'{e} III}, and S.~Gerber.
\newblock {Learning Multiple Tasks using Manifold Regularization}.
\newblock In \emph{Advances in Neural Information Processing Systems 23}, 2010.

\bibitem[Ahn and Brewer(1993)]{ahn1993psychological}
W.-K. Ahn and W.~F. Brewer.
\newblock Psychological studies of explanation---based learning.
\newblock In \emph{Investigating explanation-based learning}, pages 295--316.
  Springer, 1993.

\bibitem[Ando and Zhang(2005)]{Ando2005}
R.~K. Ando and T.~Zhang.
\newblock {A Framework for Learning Predictive Structures from Multiple Tasks
  and Unlabeled Data}.
\newblock \emph{Journal of Machine Learning Research}, 6\penalty0 (6):\penalty0
  1817--1853, 2005.
\newblock ISSN 15324435.

\bibitem[Argyriou et~al.(2007)Argyriou, Evgeniou, and Pontil]{Argyriou2007}
A.~Argyriou, T.~Evgeniou, and M.~Pontil.
\newblock {Multi-task feature learning}.
\newblock \emph{NIPS 2007}, 2007.

\bibitem[Argyriou et~al.(2008{\natexlab{a}})Argyriou, Evgeniou, and
  Pontil]{argyriou2008convex}
A.~Argyriou, T.~Evgeniou, and M.~Pontil.
\newblock {Convex multi-task feature learning}.
\newblock \emph{Machine Learning}, 73\penalty0 (3):\penalty0 243--272,
  2008{\natexlab{a}}.
\newblock ISSN 0885-6125.

\bibitem[Argyriou et~al.(2008{\natexlab{b}})Argyriou, Micchelli, Pontil, and
  Ying]{argyriou2008spectral}
A.~Argyriou, C.~Micchelli, M.~Pontil, and Y.~Ying.
\newblock {A spectral regularization framework for multi-task structure
  learning}.
\newblock \emph{Advances in Neural Information Processing Systems},
  20:\penalty0 25--32, 2008{\natexlab{b}}.

\bibitem[Argyriou et~al.(2009)Argyriou, Micchelli, and
  Pontil]{argyriou2009there}
A.~Argyriou, C.~Micchelli, and M.~Pontil.
\newblock {When is there a representer theorem? Vector versus matrix
  regularizers}.
\newblock \emph{The Journal of Machine Learning Research}, 10:\penalty0
  2507--2529, 2009.
\newblock ISSN 1532-4435.

\bibitem[Bauschke and Lucet(2012)]{BauLuc12}
H.~Bauschke and Y.~Lucet.
\newblock What is a fenchel conjugate?
\newblock \emph{Notices of the AMS}, 59:\penalty0 44--46, 2012.

\bibitem[Bauschke and Combettes(2011)]{BauCom11}
H.~H. Bauschke and P.~L. Combettes.
\newblock \emph{Convex analysis and monotone operator theory in Hilbert
  spaces}.
\newblock CMS Books in mathematics. Springer, New York, 2011.
\newblock ISBN 1441994661.

\bibitem[Baxter(2000)]{Baxter2000}
J.~Baxter.
\newblock {A model of inductive bias learning}.
\newblock \emph{Journal of Artificial Intelligence Research}, 12\penalty0
  (1):\penalty0 149--198, Feb. 2000.
\newblock ISSN 1076-9757.

\bibitem[Binder et~al.(2012)Binder, Nakajima, Kloft, M{\"u}ller, Samek,
  Brefeld, M{\"u}ller, and Kawanabe]{binder2012insights}
A.~Binder, S.~Nakajima, M.~Kloft, C.~M{\"u}ller, W.~Samek, U.~Brefeld, K.-R.
  M{\"u}ller, and M.~Kawanabe.
\newblock Insights from classifying visual concepts with multiple kernel
  learning.
\newblock \emph{PloS one}, 7\penalty0 (8):\penalty0 e38897, 2012.

\bibitem[Caruana(1993)]{DBLP:conf/icml/Caruana93}
R.~Caruana.
\newblock Multitask learning: A knowledge-based source of inductive bias.
\newblock In \emph{ICML}, pages 41--48. Morgan Kaufmann, 1993.
\newblock ISBN 1-55860-307-7.

\bibitem[Caruana(1997)]{Caruana1997}
R.~Caruana.
\newblock {Multitask Learning}.
\newblock \emph{Machine Learning}, 28\penalty0 (1):\penalty0 41 -- 75, 1997.
\newblock ISSN 08856125.
\newblock \doi{10.1023/A:1007379606734}.

\bibitem[Chen et~al.(2009)Chen, Tang, Liu, and Ye]{Chen2009}
J.~Chen, L.~Tang, J.~Liu, and J.~Ye.
\newblock {A convex formulation for learning shared structures from multiple
  tasks}.
\newblock In \emph{Proceedings of the 26th Annual International Conference on
  Machine Learning - ICML '09}, pages 1--8, New York, New York, USA, June 2009.
  ACM Press.
\newblock ISBN 9781605585161.
\newblock \doi{10.1145/1553374.1553392}.

\bibitem[Cortes et~al.(2013)Cortes, Kloft, and Mohri]{cortes2013learning}
C.~Cortes, M.~Kloft, and M.~Mohri.
\newblock Learning kernels using local rademacher complexity.
\newblock In \emph{Advances in Neural Information Processing Systems}, pages
  2760--2768, 2013.

\bibitem[Costello et~al.(2014)Costello, Heiser, Georgii, G{\"o}nen, Menden,
  Wang, Bansal, Hintsanen, Khan, Mpindi, et~al.]{costello2014community}
J.~C. Costello, L.~M. Heiser, E.~Georgii, M.~G{\"o}nen, M.~P. Menden, N.~J.
  Wang, M.~Bansal, P.~Hintsanen, S.~A. Khan, J.-P. Mpindi, et~al.
\newblock A community effort to assess and improve drug sensitivity prediction
  algorithms.
\newblock \emph{Nature Biotechnology}, 2014.
\newblock doi:10.1038/nbt.2877, to appear.

\bibitem[Daum\'{e}(2007)]{daum}
H.~Daum\'{e}.
\newblock {Frustratingly easy domain adaptation}.
\newblock In \emph{Annual meeting-association for computational linguistics},
  volume~45, page 256, 2007.

\bibitem[Evgeniou and Pontil(2004)]{Evgeniou2004}
T.~Evgeniou and M.~Pontil.
\newblock {Regularized multi-task learning}.
\newblock In \emph{International Conference on Knowledge Discovery and Data
  Mining}, pages 109--117, 2004.

\bibitem[Evgeniou et~al.(2005)Evgeniou, Micchelli, and
  Pontil]{evgeniou2006learning}
T.~Evgeniou, C.~Micchelli, and M.~Pontil.
\newblock {Learning multiple tasks with kernel methods}.
\newblock \emph{Journal of Machine Learning Research}, 6\penalty0 (1):\penalty0
  615--637, 2005.
\newblock ISSN 1532-4435.

\bibitem[G{\"o}nen and Alpaydin(2011)]{GonenA11}
M.~G{\"o}nen and E.~Alpaydin.
\newblock Multiple kernel learning algorithms.
\newblock \emph{Journal of Machine Learning Research}, 12:\penalty0 2211--2268,
  2011.

\bibitem[G\"ornitz et~al.(2011)G\"ornitz, Widmer, Zeller, Kahles, Sonnenburg,
  and R\"atsch]{Goe11}
N.~G\"ornitz, C.~Widmer, G.~Zeller, A.~Kahles, S.~Sonnenburg, and G.~R\"atsch.
\newblock {Hierarchical Multitask Structured Output Learning for Large-scale
  Sequence Segmentation}.
\newblock In \emph{Advances in Neural Information Processing Systems 24}, 2011.

\bibitem[G{\"o}rnitz et~al.(2013)G{\"o}rnitz, Kloft, Rieck, and
  Brefeld]{gornitz2013toward}
N.~G{\"o}rnitz, M.~Kloft, K.~Rieck, and U.~Brefeld.
\newblock Toward supervised anomaly detection.
\newblock \emph{Journal of Artificial Intelligence Research}, 46:\penalty0
  1--15, 2013.

\bibitem[Hubbard et~al.(2002)Hubbard, Barker, Birney, Cameron, Chen, Clark,
  Cox, Cuff, Curwen, Down, Durbin, Eyras, Gilbert, Hammond, Huminiecki,
  Kasprzyk, Lehvaslaiho, Lijnzaad, Melsopp, Mongin, Pettett, Pocock, Potter,
  Rust, Schmidt, Searle, Slater, Smith, Spooner, Stabenau, Stalker, Stupka,
  Ureta-Vidal, Vastrik, and M.]{Hubbard2002ensembl}
T.~Hubbard, D.~Barker, E.~Birney, G.~Cameron, Y.~Chen, L.~Clark, T.~Cox,
  J.~Cuff, V.~Curwen, T.~Down, R.~Durbin, E.~Eyras, J.~Gilbert, M.~Hammond,
  L.~Huminiecki, A.~Kasprzyk, H.~Lehvaslaiho, P.~Lijnzaad, C.~Melsopp,
  E.~Mongin, R.~Pettett, M.~Pocock, S.~Potter, A.~Rust, E.~Schmidt, S.~Searle,
  G.~Slater, J.~Smith, W.~Spooner, A.~Stabenau, J.~Stalker, E.~Stupka,
  A.~Ureta-Vidal, I.~Vastrik, and C.~M.
\newblock The ensembl genome database project.
\newblock \emph{Nucleic Acids Research}, 30\penalty0 (1):\penalty0 38--41,
  2002.
\newblock \doi{doi: 10.1093/nar/30.1.38}.

\bibitem[Isenbarger et~al.(2008)Isenbarger, Carr, Johnson, Finney, Church,
  Gilbert, Zuber, and Ruvkun]{Isenbarger2008rRNA}
T.~Isenbarger, C.~Carr, S.~Johnson, M.~Finney, G.~Church, W.~Gilbert, M.~Zuber,
  and G.~Ruvkun.
\newblock The most conserved genome segments for life detection on earth and
  other planets.
\newblock \emph{Orig Life Evol Biosph, ASTROBIOLGY}, 2008.
\newblock \doi{doi: 10.1007/s11084-008-9148-z}.

\bibitem[Jacob and Vert(2008)]{Jacob2008}
L.~Jacob and J.~Vert.
\newblock {Efficient peptide-MHC-I binding prediction for alleles with few
  known binders.}
\newblock \emph{Bioinformatics (Oxford, England)}, 24\penalty0 (3):\penalty0
  358--66, Feb. 2008.
\newblock ISSN 1367-4811.
\newblock \doi{10.1093/bioinformatics/btm611}.

\bibitem[Jacob et~al.(2008)Jacob, Bach, and Vert]{jacob2008clustered}
L.~Jacob, F.~Bach, and J.~Vert.
\newblock {Clustered multi-task learning: A convex formulation}.
\newblock \emph{Arxiv preprint arXiv:0809.2085}, 2008.

\bibitem[Joachims(1999)]{Joa99}
T.~Joachims.
\newblock Making large--scale {SVM} learning practical.
\newblock In B.~Sch{\"o}lkopf, C.~Burges, and A.~Smola, editors, \emph{Advances
  in Kernel Methods --- Support Vector Learning}, pages 169--184, Cambridge,
  MA, 1999. {MIT} Press.

\bibitem[Kloft and Blanchard(2011)]{kloft2011local}
M.~Kloft and G.~Blanchard.
\newblock The local rademacher complexity of lp-norm multiple kernel learning.
\newblock In \emph{Advances in Neural Information Processing Systems}, pages
  2438--2446, 2011.

\bibitem[Kloft and Blanchard(2012)]{kloft2012convergence}
M.~Kloft and G.~Blanchard.
\newblock On the convergence rate of lp-norm multiple kernel learning.
\newblock \emph{The Journal of Machine Learning Research}, 13\penalty0
  (1):\penalty0 2465--2502, 2012.

\bibitem[Kloft and Laskov(2012)]{kloft2012security}
M.~Kloft and P.~Laskov.
\newblock Security analysis of online centroid anomaly detection.
\newblock \emph{The Journal of Machine Learning Research}, 13\penalty0
  (1):\penalty0 3681--3724, 2012.

\bibitem[Kloft et~al.(2008{\natexlab{a}})Kloft, Brefeld, D{\"u}essel, Gehl, and
  Laskov]{kloft2008automatic}
M.~Kloft, U.~Brefeld, P.~D{\"u}essel, C.~Gehl, and P.~Laskov.
\newblock Automatic feature selection for anomaly detection.
\newblock In \emph{Proceedings of the 1st ACM workshop on Workshop on AISec},
  pages 71--76. ACM, 2008{\natexlab{a}}.

\bibitem[Kloft et~al.(2008{\natexlab{b}})Kloft, Brefeld, Laskov, and
  Sonnenburg]{KloBreLasSon08}
M.~Kloft, U.~Brefeld, P.~Laskov, and S.~Sonnenburg.
\newblock Non-sparse multiple kernel learning.
\newblock In \emph{Proc. of the NIPS Workshop on Kernel Learning: Automatic
  Selection of Kernels}, dec 2008{\natexlab{b}}.

\bibitem[Kloft et~al.(2009{\natexlab{a}})Kloft, Brefeld, Sonnenburg, Laskov,
  M\"{u}ller, and Zien]{KloBreSonZieLasMue09}
M.~Kloft, U.~Brefeld, S.~Sonnenburg, P.~Laskov, K.-R. M\"{u}ller, and A.~Zien.
\newblock Efficient and accurate lp-norm multiple kernel learning.
\newblock In \emph{Advances in Neural Information Processing Systems 22}, pages
  997--1005. MIT Press, 2009{\natexlab{a}}.

\bibitem[Kloft et~al.(2009{\natexlab{b}})Kloft, Nakajima, and
  Brefeld]{kloft2009feature}
M.~Kloft, S.~Nakajima, and U.~Brefeld.
\newblock Feature selection for density level-sets.
\newblock In \emph{Machine Learning and Knowledge Discovery in Databases},
  pages 692--704. Springer, 2009{\natexlab{b}}.

\bibitem[Kloft et~al.(2011)Kloft, Brefeld, Sonnenburg, and
  Zien]{KloBreSonZie11}
M.~Kloft, U.~Brefeld, S.~Sonnenburg, and A.~Zien.
\newblock Lp-norm multiple kernel learning.
\newblock \emph{Journal of Machine Learning Research}, 12:\penalty0 953--997,
  Mar 2011.

\bibitem[Liu et~al.(2009)Liu, Ji, and Ye]{Liu2009}
J.~Liu, S.~Ji, and J.~Ye.
\newblock {Multi-Task Feature Learning Via Efficient L2,1-Norm Minimization}.
\newblock In \emph{UAI 2009}, UAI '09, pages 339--348. AUAI Press, 2009.
\newblock ISBN 9780974903958.

\bibitem[Lou et~al.(2012)Lou, Widmer, Kang, R\"atsch, and
  Hadjantonakis]{Lou2012b}
X.~Lou, C.~Widmer, M.~Kang, G.~R\"atsch, and A.~Hadjantonakis.
\newblock {Structured Domain Adaptation Across Imaging Modality: How 2D Data
  Helps 3D Inference}.
\newblock In \emph{NIPS Machine Learning in Computational Biology (NIPS-MLCB)},
  2012.

\bibitem[Lou et~al.(2014)Lou, Kloft, R{\"a}tsch, and Hamprecht]{xinghua14}
X.~Lou, M.~Kloft, G.~R{\"a}tsch, and F.~A. Hamprecht.
\newblock \emph{Structured Learning from Cheap Data}, chapter~12, page 281ff.
\newblock MIT Press, 2014.

\bibitem[Obozinski et~al.(2010)Obozinski, Taskar, and
  Jordan]{obozinski2010joint}
G.~Obozinski, B.~Taskar, and M.~Jordan.
\newblock {Joint covariate selection and joint subspace selection for multiple
  classification problems}.
\newblock \emph{Statistics and Computing}, 20\penalty0 (2):\penalty0 231--252,
  2010.

\bibitem[Rifkin and Lippert(2007)]{Rifkin2007}
R.~M. Rifkin and R.~A. Lippert.
\newblock {Value Regularization and Fenchel Duality}.
\newblock \emph{Journal of Machine Learning Research}, 8:\penalty0 441--479,
  2007.
\newblock ISSN 15324435.

\bibitem[Romera-Paredes et~al.(2013)Romera-Paredes, Aung, Bianchi-Berthouze,
  and Pontil]{romera2013multilinear}
B.~Romera-Paredes, H.~Aung, N.~Bianchi-Berthouze, and M.~Pontil.
\newblock Multilinear multitask learning.
\newblock In \emph{Proceedings of The 30th International Conference on Machine
  Learning}, pages 1444--1452, 2013.

\bibitem[Roth and Fischer(2008)]{RotFis08}
V.~Roth and B.~Fischer.
\newblock The group-lasso for generalized linear models: uniqueness of
  solutions and efficient algorithms.
\newblock In \emph{Proceedings of the Twenty-Fifth International Conference on
  Machine Learning (ICML 2008)}, volume 307, pages 848--855. ACM, 2008.

\bibitem[Sch{\"{o}}lkopf et~al.(1999)Sch{\"{o}}lkopf, Mika, Burges, Knirsch,
  M{\"{u}}ller, R{\"{a}}tsch, and Smola]{SchMikBurKniMueRaeSmo99}
B.~Sch{\"{o}}lkopf, S.~Mika, C.~J.~C. Burges, P.~Knirsch, K.~R. M{\"{u}}ller,
  G.~R{\"{a}}tsch, and A.~J. Smola.
\newblock {Input space versus feature space in kernel-based methods}.
\newblock \emph{IEEE Transactions on Neural Networks}, 10\penalty0
  (5):\penalty0 1000--1017, 1999.

\bibitem[Schweikert et~al.(2008)Schweikert, Widmer, Sch\"{o}lkopf, and
  R\"{a}tsch]{NIPS2008_0510}
G.~Schweikert, C.~Widmer, B.~Sch\"{o}lkopf, and G.~R\"{a}tsch.
\newblock {An Empirical Analysis of Domain Adaptation Algorithms for Genomic
  Sequence Analysis}.
\newblock In D.~Koller, D.~Schuurmans, Y.~Bengio, and L.~Bottou, editors,
  \emph{Advances in Neural Information Processing Systems 21}, pages
  1433--1440, 2008.

\bibitem[Sonnenburg and Franc(2010)]{coffin}
S.~Sonnenburg and V.~Franc.
\newblock Coffin: A computational framework for linear {SVMs}.
\newblock In J.~F{\"u}rnkranz and T.~Joachims, editors, \emph{ICML}, pages
  999--1006. Omnipress, 2010.
\newblock ISBN 978-1-60558-907-7.

\bibitem[Sonnenburg et~al.(2006)Sonnenburg, R\"atsch, Sch\"afer, and
  Sch\"olkopf]{SonRaeSchSch06}
S.~Sonnenburg, G.~R\"atsch, C.~Sch\"afer, and B.~Sch\"olkopf.
\newblock Large scale multiple kernel learning.
\newblock \emph{Journal of Machine Learning Research}, 7:\penalty0 1531--1565,
  July 2006.

\bibitem[Sonnenburg et~al.(2010)Sonnenburg, R{\"a}tsch, Henschel, Widmer, Behr,
  Zien, deBona, Binder, Gehl, and Franc]{sonnenburg2010shogun}
S.~Sonnenburg, G.~R{\"a}tsch, S.~Henschel, C.~Widmer, J.~Behr, A.~Zien, F.~d.
  deBona, A.~Binder, C.~Gehl, and V.~Franc.
\newblock The shogun machine learning toolbox.
\newblock \emph{The Journal of Machine Learning Research}, 99:\penalty0
  1799--1802, 2010.

\bibitem[Thrun(1996)]{thrun1996learning}
S.~Thrun.
\newblock {Is learning the n-th thing any easier than learning the first?}
\newblock \emph{Advances in neural information processing systems}, pages
  640--646, 1996.

\bibitem[Tibshirani(1996)]{Tibshirani1996}
R.~Tibshirani.
\newblock {Regression shrinkage and selection via the lasso}.
\newblock \emph{Journal of the Royal Statistical Society Series B
  Methodological}, 58\penalty0 (1):\penalty0 267--288, 1996.
\newblock ISSN 00359246.
\newblock \doi{10.1111/j.1553-2712.2009.0451c.x}.

\bibitem[Toussaint et~al.(2010)Toussaint, Widmer, Kohlbacher, and
  R\"{a}tsch]{Toussaint2010}
N.~Toussaint, C.~Widmer, O.~Kohlbacher, and G.~R\"{a}tsch.
\newblock {Exploiting physico-chemical properties in string kernels.}
\newblock \emph{BMC bioinformatics}, 11 Suppl 8\penalty0 (Suppl 8):\penalty0
  S7, Jan. 2010.
\newblock ISSN 1471-2105.
\newblock \doi{10.1186/1471-2105-11-S8-S7}.
\newblock URL \url{http://www.biomedcentral.com/1471-2105/11/S8/S7}.

\bibitem[Tseng(2001)]{Tseng2001}
P.~Tseng.
\newblock Convergence of a block coordinate descent method for
  nondifferentiable minimization.
\newblock \emph{J. Optim. Theory Appl.}, 109\penalty0 (3):\penalty0 475--494,
  June 2001.
\newblock ISSN 0022-3239.
\newblock \doi{10.1023/A:1017501703105}.

\bibitem[Vishwanathan et~al.(2010)Vishwanathan, sun, Ampornpunt, and
  Varma]{VarmaSMO}
S.~V.~N. Vishwanathan, Z.~sun, N.~Ampornpunt, and M.~Varma.
\newblock Multiple kernel learning and the smo algorithm.
\newblock In \emph{Advances in Neural Information Processing Systems 23}, pages
  2361--2369, 2010.

\bibitem[Widmer and R\"{a}tsch(2012)]{widmer2012multitask}
C.~Widmer and G.~R\"{a}tsch.
\newblock {Multitask Learning in Computational Biology}.
\newblock \emph{JMLR W\&CP. ICML 2011 Unsupervised and Transfer Learning
  Workshop.}, 27:\penalty0 207--216, 2012.

\bibitem[Widmer et~al.(2010{\natexlab{a}})Widmer, Leiva, Altun, and
  R\"{a}tsch]{widmer2010leveraging}
C.~Widmer, J.~Leiva, Y.~Altun, and G.~R\"{a}tsch.
\newblock {Leveraging Sequence Classification by Taxonomy-based Multitask
  Learning}.
\newblock In B.~Berger, editor, \emph{Research in Computational Molecular
  Biology}, pages 522--534. Springer, 2010{\natexlab{a}}.

\bibitem[Widmer et~al.(2010{\natexlab{b}})Widmer, Toussaint, Altun, Kohlbacher,
  and R\"{a}tsch]{widmer2010novel}
C.~Widmer, N.~Toussaint, Y.~Altun, O.~Kohlbacher, and G.~R\"{a}tsch.
\newblock {Novel machine learning methods for MHC Class I binding prediction}.
\newblock In \emph{Pattern Recognition in Bioinformatics}, pages 98--109.
  Springer, 2010{\natexlab{b}}.

\bibitem[Widmer et~al.(2010{\natexlab{c}})Widmer, Toussaint, Altun, and
  R\"{a}tsch]{Widmer2010}
C.~Widmer, N.~Toussaint, Y.~Altun, and G.~R\"{a}tsch.
\newblock {Inferring latent task structure for Multitask Learning by Multiple
  Kernel Learning.}
\newblock \emph{BMC bioinformatics}, 11 Suppl 8\penalty0 (Suppl 8):\penalty0
  S5, Jan. 2010{\natexlab{c}}.
\newblock ISSN 1471-2105.
\newblock \doi{10.1186/1471-2105-11-S8-S5}.

\bibitem[Widmer et~al.(2012)Widmer, Kloft, G\"{o}rnitz, and
  R\"{a}tsch]{Widmer2012}
C.~Widmer, M.~Kloft, N.~G\"{o}rnitz, and G.~R\"{a}tsch.
\newblock {Efficient Training of Graph-Regularized Multitask SVMs}.
\newblock In \emph{ECML 2012}, 2012.

\bibitem[Widmer et~al.(2013{\natexlab{a}})Widmer, Kloft, Lou, and
  R\"atsch]{Widmer2013ai}
C.~Widmer, M.~Kloft, X.~Lou, and G.~R\"atsch.
\newblock {Regularization-based Multitask Learning With applications to Genome
  Biology and Biomedical Imaging.}
\newblock \emph{K\"{u}nstliche Intelligenz}, 2013{\natexlab{a}}.

\bibitem[Widmer et~al.(2013{\natexlab{b}})Widmer, Kloft, and
  R\"atsch]{festschrift}
C.~Widmer, M.~Kloft, and G.~R\"atsch.
\newblock Multi-task learning for computational biology: Overview and outlook.
\newblock In B.~Sch\"olkopf, Z.~Luo, and V.~Vovk, editors, \emph{Empirical
  Inference - Festschrift in Honor of Vladimir N. Vapnik}, pages 117--127.
  Springer, 2013{\natexlab{b}}.

\bibitem[Widmer et~al.(2014)Widmer, Heinrich, Drewe, Lou, Umrania, and
  R{\"a}tsch]{widmer2014graph}
C.~Widmer, S.~Heinrich, P.~Drewe, X.~Lou, S.~Umrania, and G.~R{\"a}tsch.
\newblock Graph-regularized 3d shape reconstruction from highly anisotropic and
  noisy images.
\newblock \emph{Signal Image Video Process}, 8\penalty0 (1 Suppl):\penalty0
  41--48, Dec 2014.
\newblock \doi{10.1007/s11760-014-0694-8}.

\bibitem[Xu et~al.(2010)Xu, Jin, Yang, King, and Lyu]{Xuetal10}
Z.~Xu, R.~Jin, H.~Yang, I.~King, and M.~Lyu.
\newblock Simple and efficient multiple kernel learning by group lasso.
\newblock In \emph{Proceedings of the 27th Conference on Machine Learning (ICML
  2010)}, 2010.

\bibitem[Zhang and Yeung(2010)]{Zhang2010}
Y.~Zhang and D.~Yeung.
\newblock {A convex formulation for learning task relationships in multi-task
  learning}.
\newblock \emph{arXiv preprint arXiv:1203.3536}, 2010.

\bibitem[Zhou et~al.(2011)Zhou, Chen, and Ye]{Zhou2011}
J.~Zhou, J.~Chen, and J.~Ye.
\newblock {Clustered Multi-Task Learning Via Alternating Structure
  Optimization}.
\newblock \emph{Advances in Neural Information Processing Systems 24}, pages
  1--9, 2011.

\end{thebibliography}

\end{document}